
\documentclass[journal]{IEEEtran}
\ifCLASSINFOpdf
\else
\fi
\usepackage{dblfloatfix}



\usepackage{amssymb}
\usepackage{amsmath}
\usepackage{amsthm}

\newtheoremstyle{mystyle}
  {}
  {}
  {\itshape}
  {}
  {\bfseries}
  {.}
  { }
  {}

\theoremstyle{mystyle}

\newtheorem{definition}{Definition}
\newtheorem{theorem}{Theorem}
\newtheorem{proposition}{Proposition}


\usepackage{booktabs}
\usepackage{multirow}
\usepackage{pifont}
\usepackage{tabularx,ragged2e,caption}
\usepackage{makecell}
\usepackage{tikz}
\usepackage{graphicx}
\usepackage{color}

\newcommand*\rott{\rotatebox{45}}
\newcommand*\rot{\rotatebox{90}}

\newcommand\diag[4]{%
  \multicolumn{2}{|p{#2}|}{\hskip-\tabcolsep
  $\vcenter{\begin{tikzpicture}[baseline=0,anchor=south west,inner sep=#1]
  \path[use as bounding box] (0,0) rectangle (#2+2\tabcolsep,\baselineskip);
  \node[minimum width={#2+2\tabcolsep},minimum height=\baselineskip+\extrarowheight] (box) {};
  \draw (box.north west) -- (box.south east);
  \node[anchor=south west] at (box.south west) {#3};
  \node[anchor=north east] at (box.north east) {#4};
 \end{tikzpicture}}$\hskip-\tabcolsep}}

\begin{document}
\title{Review \& Perspective for  \\Distance Based Trajectory Clustering \\}

\author{Philippe~Besse,~\IEEEmembership{Universit\'e de Toulouse INSA,~Institut de Math\'ematiques UMR CNRS 5219,}
        Brendan~Guillouet,~\IEEEmembership{Institut de Math\'ematiques UMR CNRS 5219,}
        Jean-Michel~Loubes,~\IEEEmembership{Universit\'e de Toulouse UT3,~Institut de Math\'ematiques UMR CNRS 5219,}
        ~and~Fran\c{c}ois~Royer,~\IEEEmembership{Datasio}
\thanks{}}

\markboth{}%
{}

\maketitle
\begin{abstract}
In this paper we tackle the issue of clustering trajectories of geolocalized observations.  Using clustering technics based on the choice of a distance between the observations, we  first provide a comprehensive review of the different distances used in the literature to  compare trajectories. Then based on the limitations of these methods, we  introduce a new distance : \textit{Symmetrized Segment-Path Distance (SSPD)}. We finally compare this new distance to the others  according to their corresponding clustering results obtained using both \textit{hierarchical clustering} and \textit{affinity propagation} methods.  \end{abstract}

\begin{IEEEkeywords}
Trajectory clustering
\end{IEEEkeywords}

%
\IEEEpeerreviewmaketitle

\section*{Introduction}

\IEEEPARstart{A}{trajectory} is a set of positional information for a moving object, ordered by time. This kind of multidimensional data is prevalent in many fields and applications, for example, to understand migration patterns by studying trajectories of animals, predict meteorology with hurricane data, improve athletes performance, etc. Our study is concentrated on vehicle trajectories within a road network. The growing use of GPS receivers and WIFI embedded mobile devices equipped with hardware for storing data enables us to collect a very large amount of data, that has to be analyzed in order to extract any relevant information. The complexity of the extracted data makes it a difficult challenge.
In this context, the goal of this work is to construct, in a data driven way, a collection of trajectories that will model the behaviors of car drivers. These models will be learned from a data set of time-stamped locations of cars. We focus in this work at the clustering of trajectories of vehicles. The natural application of this work is the forecast of the destination of drivers according to the shape of their trajectories. To achieve this goal, the first step is to cluster trajectories having similar paths. This clustering is based  on comparison between trajectory objects. This requires a new definition of distance between these objects which are studied.

A large amount of work has been done to give new definitions of trajectory distance. Tiakas \textit{et al.}(2009\cite{Tiakas2009}) , Rossi \textit{et al.} (2012\cite{Rossi2012}), Han \textit{et al.}(2012\cite{Han2012}) or Hwang \textit{et al.}(2005\cite{Hwang2005}) propose road network based distances. They assume that the trajectories studied are perfectly mapped on the road network. However, this task is strongly dependent on the precision of the GPS device. When the time interval between two GPS locations is significant, several paths on the graph are possible between locations, especially when the network is dense. Moreover it requires the knowledge of the road network. Here, we focus on completely data driven methods without	 any a priori information.
Several methods have been used to cluster data set of trajectories. Clustering methods using the Euclidean distance lead to bad results mainly due to the fact that trajectories have different lengths.
Hence, several methods based on warping distance have been defined , Berndt (1994\cite{Berndt1994}), Vlachos \textit{et al.} (2002\cite{Vlachos2002}), Chen \textit{et al.} (2004\cite{Chen2004}), and Chen \textit{et al.} (2005 \cite{Chen2005}).  These methods reorganize the time index of trajectories to obtain a perfect match between them. Another point of view is to focus on the geometry of the trajectories, in particular on their shape. Shape distances like Hausdorff and Fr\'echet distances can be adapted to trajectories but fail to compare them as a whole. Lin \textit{et al.} (2005\cite{Lin2005}) proposed a method based exclusively on the shape of the trajectory but at high computational cost.

In section \ref{section_review} of this paper several distances are studied and compared. A new distance will be presented in section \ref{section_new_distance}: the Symetrized Segment-Path Distance (\textit{SSPD}). \textit{SSPD} is a shape-based distance that does not take into account the time index of the trajectory. It compares trajectories as a whole, and is less affected by incidental variation between trajectories. It also takes into account the total length, the variation and the physical distance between two trajectories. To evaluate our distances, and compare them to others, clustering results of some trajectory sets are analyzed in section \ref{section_results}.

\section{Model for trajectory clustering}\label{model_trajectory_clustering}

\subsection{Trajectory}
A continuous trajectory is a function which gives the location of a moving object as a continuous function of time. In our case we will only consider discrete trajectories defined here after.

\begin{definition} A trajectory $T$ is defined as  \\
$T$ : $((p_1,t_1),\hdots,(p_{n},t_{n}))$, \\ where $p_k  \in \mathbb{R}^2,t_k \in \mathbb{R} ~ \forall k \in [1\hdots n], ~\forall n \in \mathbb{N} $ and $n$ is the length of the trajectory $T$.
\end{definition}

The exact locations between time $t_i$ and $t_{i+1}$ are unknown. When these locations are required, a piece wise linear representation is used between each successive location $p_i$ and $p_{i+1}$ resulting in a line segment $s_i$ between these two points. This new representation is called a piece wise linear trajectory. In this representation, no assumption is made about time indexing of segment $s_i$.

\begin{definition} A piece wise linear trajectory is defined as 
$T_{pl}$ : $((s_1),\hdots,(s_{n-1}))$ , where $s_k \in \mathbb{R}^2$ and $n_{pl}$ is the length of the trajectory.
\end{definition}
The length of the trajectory $n_{pl}$ is the sum of the lengths of all segments that compose it : $n_{pl} = \sum_{i \in [1 \hdots n-1]} \|p_ip_{i+1}\|_2$.

The notation used in this paper are summarized in Table \ref{table_notation}.

\begin{table}[!t]
\centering
\caption{Notation}\label{table_notation}
\begin{tabular}{|c|l|} \hline
 $\mathcal{T}$ & The set of trajectories \\ \hline
 $T^i$ & The $i^{th}$ trajectory of set $\mathcal{T}$ \\ \hline
 $T^i_{pl}$ & The piece wise linear representation of $T^i$ \\ \hline 
 $n^i$ & Length of trajectory $T^i$ \\ \hline
 $n^i_pl$ & Length of the $T^i_pl$ \\ \hline
 $p^i_k$ & The $k^{th}$ location of $T^i$\\ \hline
 $p^i$ & The set of continuous points that compose $T^i_{pl}$  \\ \hline
 $s^i_k$ & The line segment between  $p^i_j$ and  $p^i_{k+1}$  \\ \hline
 $t^i_k$ & The time index of location $p^i_k$\\ \hline
 $\|p_kp_l\|_2$ & The Euclidean distance between $p_k$ and $p_l$ \\ \hline 
\end{tabular}
\end{table}

\subsection{Distance}
There are many ways to define how close two objects are far one from another. Beyond the notion of mathematical distance, many functions can be used to qualify this dissimilarity. The terminology used in literature to define them is not completely standardized. Therefore we will use the definition established in Deza \textit{et al.} (2009\cite{Deza2009}) as a reference.

\begin{definition}Let $\mathcal{T}$ be a set of trajectories. A function d : $\mathcal{T} \times \mathcal{T} \mapsto \mathcal{R}$ is called a $dissimilarity$ on $\mathcal{T}$ if for all $T^1,T^2 \in \mathcal{T}:$ 
\begin{itemize}
\item $d(T^1,T^2) \geq 0$
\item $d(T^1,T^2) = d(T^2,T^1)$
\item $d(T^1,T^1)=0$
\end{itemize}
If all of these conditions are satisfied and $d(T^1,T^2)=0 \implies T^1=T^2$ d is considered to be a $symmetric$. If the triangle inequality is also satisfied, d is called a $metric$. These notations are summarized in Table \ref{metrics}.
\end{definition}

\begin{table*}[!t]
\centering
\caption{Metric Definition}\label{metrics}
\begin{tabular}{|ll|ccc|}
\hline
\diag{2em}{30em}{ Property }{ Metric Name } &
\rott{\textit{dissimilarity}} &
\rott{\textit{symmetric}} &
\rott{\textit{metric}}
    \\ \hline
Non-Negativity & $D(T^1,T^2) \geq 0$                          & X  & X  &$\ast$\\ 
Symmetry & $D(T^1,T^2) = D(T^2,T^1)$                          & X  & X  & X  \\
Reflexivity & $D(T^1,T^1)=0 $                                 & X  &$\ast$& $\ast$\\
Triangle Inequality & $D(T^1,T^3) \leq D(T^1,T^2)+D(T^2,T^3)$ &    &    & X  \\
Identity of indiscernible & $D(T^1,T^2)=0 \implies T^1=T^2 $  &    & X  & X  \\ \hline
\end{tabular}
\end{table*}
X indicates the required properties for each distances, while $\ast$ indicates properties that are automatically satisfied (by the presence of the other required properties for the metric). 

\subsection{Desired properties of clustering and distances}\label{desired_properties}
Our aim is to be able to predict the most probable next location of a moving object given few location data points. This prediction should be based on groups of past trajectories that have been gathered together sharing a similar behavior. Hence we aim at finding a clustering method that should regroup trajectories 
\begin{itemize}
\item with similar shape and length 
\item which are physically close to each other
\item which are similar as a whole with more than just similar sub-parts
\item all of these properties should be considered without regard to their time indexing
\end{itemize}

Moreover we want to design a very general procedure able to treat all trajectories data, without a prior knowledge on the particular geographical location where they are collected.
To obtain such clustering, the issue of this work is to find a distance that respects such properties and succeed in extracting these features. Actually, the desired distance should have the following properties,

\begin{itemize}
\item it compares distances as a whole
\item the compared trajectories can be of different lengths,
\item the time indexing can be very different from one trajectory to another
\item the trajectories can have similar shapes but can be physically far from each other and vice versa
\item extra parameters should not be required.
\end{itemize} 

\section{Distance on trajectories: a review} 
\label{section_review}

Three main kind of distances have been introduced in the literature. The first uses the underlying road network, \textbf{Network-Constrained Distance}. Theses distances will not be detailed in this paper.  They assume that the road network is known and that trajectory data are perfectly map on it.  Distances that do not use the underlying road network can also be classified into two categories: those who only compare the shape of the trajectory, \textbf{Shape-Based Distance} and those who take into account the temporal dimension; \textbf{Warping based Distance}. 

Performance of clustering algorithms using these distances will be compared section \ref{model_trajectory_clustering}, as well as their computation cost and their metric properties.

\subsection{Warping based Distance} 

Euclidean distance, Manhattan distance or other $L^p$-norm distances are the most obvious and the most often used distances. They compare discrete objects of the same length. They can be used to look for common sub-trajectories of a given length but they can not be used to compare entire trajectories. Moreover, these distances will compare locations with common indexes one by one.  At a given index $i$, location $p^1_i$ of trajectory $T^1$  will be compared only to location $p^2_i$ of trajectory $T^2$. However, these locations  can be strongly different according to the speeds of the trajectories. Hence, it makes no sense to compare them without taking this into account. This problem is also common in time series analysis and not only in trajectory analysis.   

Warping distance aims to solve this problem. For this purpose, they enable to match locations from different trajectories with different indexes. Then, they find an optimal alignment between two trajectories, according to a given cost $\delta$ between matched location.
Several warping based distance have been defined. DTW ({Berndt \textit{et al.}, (1994 \cite{Berndt1994})) and later $LCSS$ (Vlachos \textit{et al.}, 2002\cite{Vlachos2002}), $EDR$ (Chen \textit{et al.}, 2005\cite{Chen2005}) and $ERP$ (Chen \textit{et al.}, 2005\cite{Chen2004}).  These distances are defined the same way, but they use different cost functions.

In order to define a warping distance, two compared time series trajectories, $T^i, T^j$, are arranged to form a $n^i\times n^j$ grid $G$. The grid cell, $g_{k,l}$, corresponds to the pair ($p^i_k$,$p^j_l$).
\begin{definition}
A warping path, $W=w_1,\hdots,w_{|W|}$, crosses the grid $G$ such that
\begin{itemize}
\item $w_1 = g_{1,1}$,
\item $w_{|W|}=g_{n^i,n^j}$,
\item if $w_k=g_{k_i,k_j}$, then $w_{k+1}$ is equal to $ g_{k_i+1,k_j}$, $g_{k_i,k_j+1}$ or $g_{k_i+1,k_j+1}$.
\end{itemize}
\end{definition}

The order of the locations in a trajectory are maintained but they can be repeated, deleted or replaced by an arbitrary value, a $gap$, along the warping path.
The distance is then computed by minimizing or maximizing the sum of a given cost $\delta$ between all pair of locations that make a warping path $W$, among all existing warping path.

\begin{definition}\label{def_warping} A warping distance is defined as 
\begin{equation}
\begin{array}{rl}
D(T^i,T^j) = & \min_W  \Big[ \sum_{k=1}^{|W|} \delta(w_k) \Big], \\
\text{ or } = & \max_W  \Big[ \sum_{k=1}^{|W|} \delta(w_k) \Big],
\end{array}
\end{equation}

where 
$\delta(w_k)=\delta(g_{k_i,k_j})=\delta(p^i_{k_i},p^j_{k_j}),$ is the cost function and $W$ is a warping path.

\end{definition}

They are generally computed by dynamic programming. Table \ref{re_indexing_based_distance_definition} displays the cost functions as well as the dynamic formulation of these distances.

\begin{table*}[!t]
\centering
\caption{Re-Indexing based distance definition}\label{re_indexing_based_distance_definition}

\resizebox{0.8\linewidth}{!}{%
\makebox[\textwidth][c]{
\begin{tabular}{c|c|c|c l|}\cline{2-5}

& & Cost function & \multicolumn{2}{c|}{Distance}  \\
& & $\delta_{NAME}(p_1,p_2)$ = & \multicolumn{2}{c|}{$NAME(T^i,T^j)$ =}    \\ \cline{2-5}
 & \rot{DTW}  &$\|p_1p_2\|_2$ & 
$=$ &  $\left\{
  \begin{array}{l l }
    0 & \text{if } n^i=n^j = 0  \\
    \infty & \text{if } n^i=0\text{ or }n^j = 0 \\
     \begin{array}{l }
     \delta_{DTW}(p^i_1,p^j_1) + \\
     \min \bigg\{ \begin{array}{l }
     DTW(rest(T^i),rest(T^j)), \\
     DTW(rest(T^i),T^j)),\\
     DTW(T^i,rest(T^j)
     \end{array}\Bigg\}
     \end{array}  & \text{otherwise} 
  \end{array} \right.$
 \\
 \cline{2-5}

\multirow{4}{*}{$\rot{NAME}$} & \rot{LCSS}  & $\left\{
  \begin{array}{l l}
    1 & \text{if } \|p_1p_2\|_2 < \varepsilon_d  \\
    0 &  \text{if } p_1\text{ or }p_2\text{ is a gap} \\
    0 & \text{otherwise}
  \end{array} \right.$ &
$=$ & $\left\{
  \begin{array}{l l }
    0 & \text{if } n^i=0\text{ or }n^j = 0  \\
    LCSS(rest(T^i),rest(T^j)) + \delta_{LCSS}(p^i_1,p^j_1) & \text{if } \delta_{LCSS}(p^i_1,p^j_1) = 1 \\
     \begin{array}{l }
     \max \bigg\{ \begin{array}{l l l }
     LCSS(rest(T^i),T^j)) & + & \delta_{LCSS}(p^i_1,gap),\\
     LCSS(T^i,rest(T^j)) & + & \delta_{LCSS}(gap,p^j_1)
     \end{array}\Bigg\}
     \end{array}  & \text{otherwise} 
  \end{array} \right.$ \\
\cline{2-5}

& \rot{EDR} & $\left\{
  \begin{array}{l l}
    0 & \text{if } \|p_1p_2\|_2 < \varepsilon_d  \\
    1 &  \text{if } p_1\text{ or }p_2\text{ is a gap} \\
    1 & \text{otherwise}
  \end{array} \right.$ & 
$=$ & $\left\{
  \begin{array}{l l }
    n^i & \text{if } n^j = 0  \\
    n^j & \text{if } n^i = 0  \\
    EDR(rest(T^i),rest(T^j)) & \text{if }  \delta_{EDR}(p^i_1,p^j_1)=0 \\
     \begin{array}{l }
     \min \bigg\{ \begin{array}{l l l }
     EDR(rest(T^i),rest(T^j)) & + & \delta_{EDR}(p^i_1,p^j_1), \\
     EDR(rest(T^i),T^j)) & + & \delta_{EDR}(p^i_1,gap) ,\\
     EDR(T^i,rest(T^j) & + & \delta_{EDR}(gap,p^j_1)
     \end{array}\Bigg\}
     \end{array}  & \text{otherwise} 
  \end{array} \right.$ \\
  \cline{2-5}

& \rot{ERP} &  $\left\{
  \begin{array}{l l}
    \|p_1p_2\|_2 &\text{if } p_1\text{, }p_2\text{ are not gaps} \\
    \|p_1g\|_2 &  \text{if } p_2\text{ is a gap} \\
    \|gp_2\|_2 & \text{if } p_1\text{ is a gap} \\
  \end{array} \right.$ & 
$=$ & $\left\{
  \begin{array}{l l }
    \sum_{k=1}^{n^i} \|p^i_kg\|_2 & \text{if } n^j = 0  \\
    \sum_{l=1}^{n^j} \|p^j_lg\|_2 & \text{if } n^i = 0  \\
     \begin{array}{l }
     \min \bigg\{ \begin{array}{l l l }
     ERP(rest(T^i),rest(T^j)) & + & \delta_{ERP}(p^i_1,p^j_1), \\
     ERP(rest(T^i),T^j)) & + & \delta_{ERP}(p^i_1,gap) ,\\
     ERP(T^i,rest(T^j) & + & \delta_{ERP}(gap,p^j_1)
     \end{array}\Bigg\}
     \end{array}  & \text{otherwise} 
  \end{array} \right.$ \\
  \cline{2-5}

\end{tabular}}%
}

\end{table*}

On contrary to the three other distances, $LCSS$ is a similarity. The exact similarity used in Vlachos \textit{et al.}, 2002\cite{Vlachos2002} is $S(T^i,T^j)=\frac{LCSS(T^i,T^j)}{\min\{n^i,n^j\}}$, which is between $0$ and $1$. We will then use the distance
$$DLCSS(T^i,T^j) = 1-S(T^i,T^j),$$ to compare distances to each other.

The metric types of these distance functions, and computational cost for the four methods are summarized in table \ref{re_indexing_based_distance_properties}. \\

\subsubsection{Comparisons}
\begin{itemize}
\item All of these distances handle local time shifting.
\item The cost function $\delta$ uses the Euclidean distance. Some of this distances have been defined using a L1-norm, but Euclidean distance is more adapted for real values.
\item $LCSS$ and $EDR$'s cost function count the number of occurence where the Euclidean distance between matched location does not match a spatial threshold, $\varepsilon_d$.  The former counts similar locations, the latter the difference. This threshold makes the distance robust to noise. However, it has a strong influence on the final results. If the threshold is large, all the distances will be considered similar and if low, only those having very close locations will be considered similar.  
\item In comparison, $ERP$ and $DTW$ put a weight to these differences by computing the real distance between the locations. In this sense they can be viewed as more accurate.
\item $ERP$ is the only distance which is a $metric$ regardless of the $L_p$ norm used, yet it works better for normalized sequences, especially for defining the gap value $g$. It does not apply for vehicle trajectories.
\item In addition, these distances may include a time threshold, $\varepsilon_t$. Thus, two locations will not be compared if the difference between their time indexing is too large. However, it is very hard to estimate the value of this threshold when comparing trajectories due to the presence of noise.\end{itemize}

\begin{table}[!t]
\centering
\caption{Re-Indexing based distance properties}\label{re_indexing_based_distance_properties}

\begin{tabular}{|c|c|c|} \hline
Name & Metric Types  & Computation \\
&   & Cost \\ \hline
DTW  & \textit{symmetric} &  $O(n^2)$ \\ \hline
LCSS  & \textit{distance} & $O(n^2)$ \\ \hline
EDR  & \textit{symmetric} & $O(n^2)$ \\ \hline
ERP  & \textit{metric} & $O(n^2)$ \\ \hline
\end{tabular}

\end{table}

\subsubsection{Pros and Cons}

The main advantage of these distances is that they enable comparison of sequences of different lengths.

The two main limitations of warping based distance are the following 
\begin{itemize}

\item Warping methods are based on one-to-one comparison between sequences. Hence, it often requires the choice of a particular series that will be used as a reference, onto which all other sequences will be matched. The indexed of two sequences that are compared should be well balanced in order to capture best the variability. For instance to detect if there were accelerations and decelerations during the measurement of the time series. Hence the choice of the reference sequence is very important.
\item The performance of usual methods based on warping techniques is hampered by the large amount of noise inherent to road traffic data, which is not the case when studying time series.
\end{itemize}
Instead of correcting the time index, the solution is to use distances that have the effect of time removed.

\subsection{Shape-Based Distance}

These distances try to catch geometric features of the trajectories, in particular, their shape. Among \textbf{Shape-Based Distances}, the Hausdorff distance (Hausdorff, 1914 \cite{Hausdorff1914}), and the Fr\'echet distance (Fr\'echet, 1906\cite{Frechet1906}) are likely the most well known.\\

\subsubsection{Hausdorff}

The \textit{Hausdorff} distance is a \textit{metric}. It measures the distance between two sets of metric spaces. Informally, for every point of set 1, the infimum distance from this point to any other point in set 2 is computed. The supremum of all these distances defines the \textit{Hausdorff} distance.

\begin{definition} The \textit{Hausdorff} distance between two sets of metric spaces is defined as
$$Haus(X,Y) = \max\{\displaystyle \sup_{x\in X} \displaystyle \inf_{y\in Y} \|xy\|_2,\displaystyle \sup_{y\in Y} \displaystyle \inf_{x\in X} \|xy\|_2 \}.$$
\end{definition}

This distance is complicated and resource intensive to calculate when applied to most existing sets. But in the case of polygonal curves like trajectories, some simplification can be made due to the monotonic properties of a segment. Distance from a point $p$ to a segment $s$ is defined as follows.

\begin{definition} $Point-to-Segment$ distance.\label{definition-distance-segment-trajectory}
$$ \small \displaystyle
D_{ps}(p^1_{i_1},s^2_{i_2})=\left\{
   \begin{array}{ll}
       \|p^1_{i_1}p^{1proj}_{i_1}\|_2 & \mbox{if } p^{1proj}_{i_1} \in s^2_{i_2}, \\
       \min(\|p^1_{i_1}p^2_{i_2}\|_2,\|p^1_{i_1}p^2_{i_2+1}\|_2) & \mbox{otherwise.}
   \end{array}
\right.$$
\end{definition}
\textit{Where $p^{1proj}_{i_1}$ is the orthogonal projection of $p^{1}_{i_1}$ on the segment $s^2_{i_2}$}.\\

Hence, the \textit{Hausdorff} distance between two line segments is
$$\arraycolsep=1.4pt
\begin{array}{lll}
D_{Haussdorf}(s^1_{i_1},s^2_{i_2}) & = \max\{ &  \sup_{p\in s^1_{i_1}} D_{ps}(p,s^2_{i_2}), \\
& & \sup_{p\in s^2_{i_2}} D_{ps}(p,s^1_{i_1})\} \\
& =\max\{ & D_{ps}(p^1_{i_1},s^2_{i_2}),D_{ps}(p^1_{i_1+1},s^2_{i_2}),\\
& & D_{ps}(p^2_{i_2},s^1_{i_1}),D_{ps}(p^2_{i_2+1},s^1_{i_1}) \}. \\
\end{array}
$$

\begin{figure}[!ht]
\centering
\includegraphics[width=\linewidth]{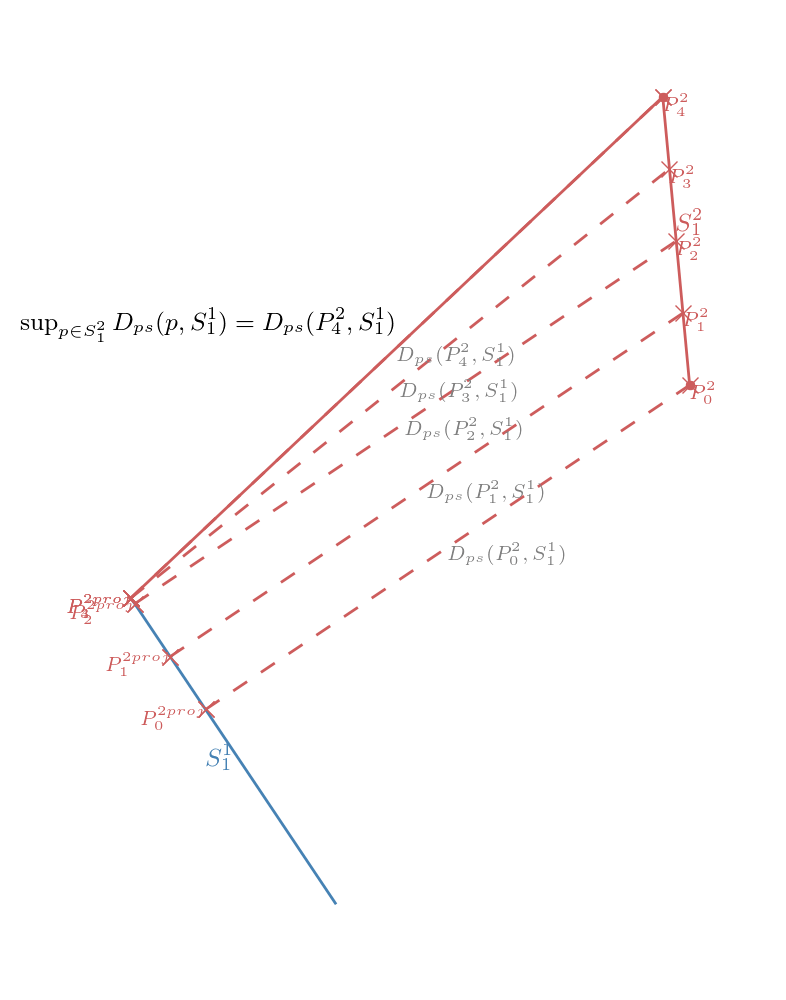}
\caption{Supremum of $Point-to-Segment$ distance from point of segment $s^1_1$ to segment $s^2_1$}
\label{hausdorff_segment}
\end{figure}

Indeed, a segment is monotonic. As seen in Fig. \ref{hausdorff_segment}, the supremum of the $Point-to-Segments$ distance from any points of a segment $s^1_{i_1}$ to a segment $s^2_{i_2}$ occurs at one of the end points of the segment $s^1_{i_1}$. The \textit{Hausdorff} distance between two trajectories can then be computed with the following formula.

\begin{definition} \textit{Hausdorff} distance between two discrete trajectories.
$$\small \begin{array}{ll}
D_{Haussdorf}(T^1,T^2)=\max\big\{ &  \max_{\substack{i_1 \in [1\hdots n^1] \\
j_2\in [1\hdots n^2-1]} } \{D_{ps}(p^1_{i1},s^2_{j2}\}, \\
&  \max_{\substack{j_1 \in [1\hdots n^1-1] \\
i_2\in [1\hdots n^2]} } \{D_{ps}(p^2_{i2},s^1_{j1}\} \big\}.
\end{array}$$
\end{definition}

The \textit{Hausdorff} distance can then be computed in a $O(n^2)$ computational time.

\subsubsection{Frechet and discrete Fr\'echet }
The \textit{Frechet} distance measures similarity between curves. It is often known as the "walking-dog distance". Imagine a dog and its owner walking on two separate paths without backtracking from one endpoint to one other. The Fr\'echet distance is the minimum length of leash required to connect a dog and its owner. While the Hausdorff distance takes distance between arbitrary point, the Fr\'echet metric takes the flow of the two curves into account.

\begin{definition} The \textit{Frechet} distance between two curves is defined as
$$D_{Frechet}(A,B) = \displaystyle \inf_{\alpha,\beta \in X} \displaystyle \max_{t \in [0,1]} \Big\{ \|A\big(\alpha(t),B\big(\beta(t)\big)\|_2 \Big\}.$$
\end{definition}

As well as the \textit{Hausdorff} distance, the \textit{Frechet} distance is a \textit{metric}. It is also resource intensive. Alt \textit{et al.} (1995\cite{Alt1995}) developed an algorithm measuring the exact Fr\'echet distance for polygonal curves based on the free space definition.

\begin{definition} A free space $F_{\epsilon}(T^1,T^2)$ between two trajectories is the set of all pairs of points whose distance is at most $\epsilon$.
$$ F_{\epsilon}(T^1,T^2):=\{(p^1,p^2) \in (T^1,T^2)\} | \|p^1,p^2\|_2 \leq \epsilon \}.$$
\end{definition}

The \textit{Fr\'echet distance} between two trajectories $T^1$ and $T^2$ is the minimum value of $\epsilon$ for which a curve exists within the corresponding $F_{\epsilon}$ from $(p^1_0,p^2_0)$ to $(p^1_{n^1},p^2_{n^2})$ with the property of being monotone in both trajectories. Computing the \textit{Fr\'echet distance}  means finding this minimum value of $\epsilon$. By exploiting the monotonic property of the segments and the definition of the free space, this task can be accomplished more efficiently.

Indeed, the \textit{Frechet} distance between segments is equal to the \textit{Hausdorff} distance between segments, \textit{i.e.} 

$$
\begin{array}{rll}
D_{Frechet}(s^1_{i_1},s^2_{i_2})&=\max\{ &  D_{ps}(p^1_{i_1},s^2_{i_2}), \\
& & D_{ps}(p^1_{i_1+1},s^2_{i_2}),\\
& & D_{ps}(p^2_{i_2},s^1_{i_1}), \\ 
& &D_{ps}(p^2_{i_2+1},s^1_{i_1}) \} \\
&=\epsilon_{i_1,i_2}.
\end{array}
$$

To compute the \textit{Frechet} distance between trajectories $T^1$ and $T^2$ , we only look among the set $E$ of  \textit{Frechet} distances between all pairs of segments of $T^1$ and $T^2$.  $E = \{\epsilon_{i_1,i_2}\text{ for } (i_1,i_2)\in([1\hdots n^1-1]\times[1\hdots n^2-1])\}$. This simplification enables us to compute the \textit{Frechet} distance between trajectories $T^1$ and $T^2$ in $O(n^2log(n^2))$. We highlight that this computational cost is higher than all the other distance studied. 

Eiter \textit{et al.} (1994\cite{Eiter1994}) describes an approximation of this distance for polygonal curves called the \textit{discrete Fr\'echet} distance. This distance is close to the definition of the warping based distance.
\begin{definition} The \textit{discrete Fr\'echet} distance is defined as
$$D_{Frechet-Discr}((T^1,T^2) = \min_W \{\displaystyle \max_{k \in [1 \hdots |W|]} \|w_k\|_2 \}.$$
\end{definition}
with $W$, the warping path defined in definition \ref{def_warping}. The \textit{discrete Fr\'echet} distance can be computed in  $O(n^2)$ time.

This distance is bounded as follows.

\begin{theorem}[]{For any trajectories $T^i$ and $T^j$ \cite{Eiter1994}}
$$\small
D_{Frechet}(T^i,T^j) \leq D_{Frechet-Discr}((T^i,T^j) \leq D_{Frechet}(T^i,T^j) + \epsilon $$ \\
Where, $\epsilon=\max\{\displaystyle \max_{k \in [1\hdots n^i-1]} \{ \|p^i_kp^i_{k+1}\|_2 \},\max_{l \in [1\hdots n^j-1]} \{ \|p^j_lp^j_{l+1}\|_2 \}\}.$
\end{theorem}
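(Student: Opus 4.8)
The plan is to prove the two inequalities separately, each by passing between the discrete object (a warping path / coupling of vertices) and the continuous object (a pair of monotone reparametrizations of $[0,1]$), and by tracking exactly how much distance the translation costs.

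For the left inequality $D_{Frechet}(T^i,T^j)\le D_{Frechet-Discr}(T^i,T^j)$ I would start from an optimal warping path $W=w_1,\hdots,w_{|W|}$ realizing the discrete Fr\'echet distance, with $w_k=g_{a_k,b_k}$ corresponding to the vertex pair $(p^i_{a_k},p^j_{b_k})$, so that $D_{Frechet-Discr}(T^i,T^j)=\max_k\|w_k\|_2$. Cut $[0,1]$ into $|W|-1$ equal subintervals $I_k$, one per step of $W$, and on $I_k$ let the two reparametrizations move $A$ from $p^i_{a_k}$ to $p^i_{a_{k+1}}$ and $B$ from $p^j_{b_k}$ to $p^j_{b_{k+1}}$, both linearly in arc length with a common local parameter $s\in[0,1]$ (a coordinate is held fixed when the corresponding index does not advance). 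Because consecutive entries of a warping path differ by at most one in each coordinate, $p^i_{a_k}$ and $p^i_{a_{k+1}}$ are adjacent (or equal) vertices of $T^i$, so $A\circ\alpha$ genuinely traces $T^i$; the resulting $\alpha,\beta$ are continuous, non-decreasing and onto. On $I_k$ one writes $A(\alpha(t))-B(\beta(t))=(1-s)(p^i_{a_k}-p^j_{b_k})+s(p^i_{a_{k+1}}-p^j_{b_{k+1}})$ and uses convexity of the norm to bound its length by $\max(\|w_k\|_2,\|w_{k+1}\|_2)\le D_{Frechet-Discr}(T^i,T^j)$. Maximizing over $k$ exhibits a pair of reparametrizations whose leash length is at most $D_{Frechet-Discr}(T^i,T^j)$, so the infimum defining $D_{Frechet}$ is no larger.

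For the right inequality $D_{Frechet-Discr}(T^i,T^j)\le D_{Frechet}(T^i,T^j)+\epsilon$, fix $\eta>0$ and choose monotone continuous onto reparametrizations $\alpha,\beta$ with $\max_t\|A(\alpha(t))-B(\beta(t))\|_2\le D_{Frechet}(T^i,T^j)+\eta$. Let $0=\tau_1<\hdots<\tau_m=1$ list the times at which $A\circ\alpha$ sits at a vertex of $T^i$ or $B\circ\beta$ sits at a vertex of $T^j$. At each $\tau_r$ I build a vertex pair: if $A(\alpha(\tau_r))=p^i_\kappa$, take $i$-index $\kappa$ and let the $j$-index be an endpoint of the $T^j$-segment currently containing $B(\beta(\tau_r))$ (symmetrically if it is $B$ that is at a vertex; take both vertices if both are). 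Between two consecutive ``$i$-vertex times'' the point $A\circ\alpha$ stays inside one segment of $T^i$, so the assigned $i$-indices only repeat or increase by one along the $\tau_r$; the same holds for $j$. Hence, after the obvious clean-up (deleting repeated pairs, and splitting a step only when a zero-length segment is crossed), the sequence of pairs is a legitimate warping path from $g_{1,1}$ to $g_{n^i,n^j}$. For each pair the triangle inequality gives $\|p^i_\kappa-p^j_\lambda\|_2\le\|p^i_\kappa-A(\alpha(\tau_r))\|_2+\|A(\alpha(\tau_r))-B(\beta(\tau_r))\|_2+\|B(\beta(\tau_r))-p^j_\lambda\|_2$, where the vertex term is $0$, the other boundary term is at most the length of the containing segment, hence at most $\epsilon$, and the middle term is at most $D_{Frechet}(T^i,T^j)+\eta$. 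Therefore $D_{Frechet-Discr}(T^i,T^j)\le\max_k\|w_k\|_2\le D_{Frechet}(T^i,T^j)+\epsilon+\eta$, and letting $\eta\to0$ finishes the argument (one may take $\eta=0$ if the Fr\'echet infimum is attained, which it is for polygonal curves).

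The norm-convexity estimate and the triangle inequality are routine; the step that needs real care is the one inside the second inequality, namely checking that rounding the continuously matched points to nearby vertices yields a \emph{monotone} coupling, i.e. that the induced index sequences never decrease. That is exactly where the observation ``$A\circ\alpha$ stays on a single segment between consecutive $i$-vertex times'' is used, and the degenerate configurations (both curves at a vertex at once, or a zero-length segment) are pure bookkeeping that do not affect the bound.
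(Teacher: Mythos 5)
The paper does not prove this statement at all: it is quoted verbatim from Eiter and Mannila (1994), with the proof delegated to that reference, so there is no in-paper argument to compare against. Your reconstruction is correct and is essentially the argument of the cited source. The left inequality via linear interpolation along an optimal coupling and the bound $\|(1-s)u+sv\|_2\le\max(\|u\|_2,\|v\|_2)$ is exactly right, and the right inequality via sampling a near-optimal continuous matching at all vertex-hitting times, rounding the other curve to an endpoint of its current segment (cost at most one edge length, hence at most $\epsilon$), and applying the triangle inequality is the standard discretization. The one point you correctly flag as delicate --- that the induced index sequences are non-decreasing and advance by at most one per step --- does go through, because every vertex-crossing time of either curve is included among the $\tau_r$ and the reparametrizations are monotone; choosing the rounding endpoint consistently (say, always the vertex already passed) keeps the coupling a legitimate warping path from $g_{1,1}$ to $g_{n^i,n^j}$. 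No gap.
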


\subsubsection{One Way Distance}
Lin \textit{et al.} 2005\cite{Lin2005} defines the \textit{One-Way-Distance}, \textit{OWD}, from a trajectory $T^i$ to another trajectory $T^j$. It is defined as the integral of the distance from points of $T^i_{pl}$ to trajectory $T^j_{pl}$ divided by the length of $T^i_{pl}$

\begin{definition} The \textit{OWD} distance is defined as 
$$D_{OWD}(T^i,T^j) = \frac{1}{n^i_{pl}} \int_{p^i \in T^i_{pl}} D_{point}(p^iT^j) dp^i,$$
\end{definition}
where $D_{point}(p,T)$ is the distance from the point $p$ to the trajectory $T$ so that  

$$D_{point}(p,T) = \displaystyle \min_{q\in T_{pl}} \|pq\|_2.$$

The \textit{OWD} distance is not symmetric, but $D_{SOWD}(T^i,T^j) = (D_{OWD}(T^i,T^j) +D_{OWD}(T^j,T^i) )/2$ is. This distance is a $symmetric$ because it does not satisfy the triangle inequality.

Lin \textit{et al.}\cite{Lin2005},  have defined two algorithms to compute the \textit{OWD} in case of piecewise linear trajectories. 
\begin{itemize}
\item The first consists in finding the parametrized OWD function $D_{OWD}(s_k^i,T^j)$ from a segment $s_k^i$ of $T^i_{pl}$ to all segments $s^j$ of $T^j_{pl}$ and for all segments of $T_{pl}^i$
$$D_{OWD}(T^i,T^j) = \frac{1}{n^i_{pl}}\displaystyle \sum_{k=1}^{n^i-1}D_{OWD}(s^i_k,T^j).\|p^i_kp^i_{k+1}\|,$$
with a $O(n^2log(n))$ complexity.

\item The second one uses a grid representation of the trajectory. The space is discrete as we see in Fig. \ref{grid_rep}. Trajectory are defined as the succession of grids they crossed 

\begin{definition} A grid representation trajectory is defined as 
$$T_{grid} := (g_0,\hdots,g_{n_{grid}}),$$ where $g_n$ are cells of the discrete space.
\end{definition}

\begin{figure}[!ht]
\centering
\includegraphics[width=0.7\linewidth]{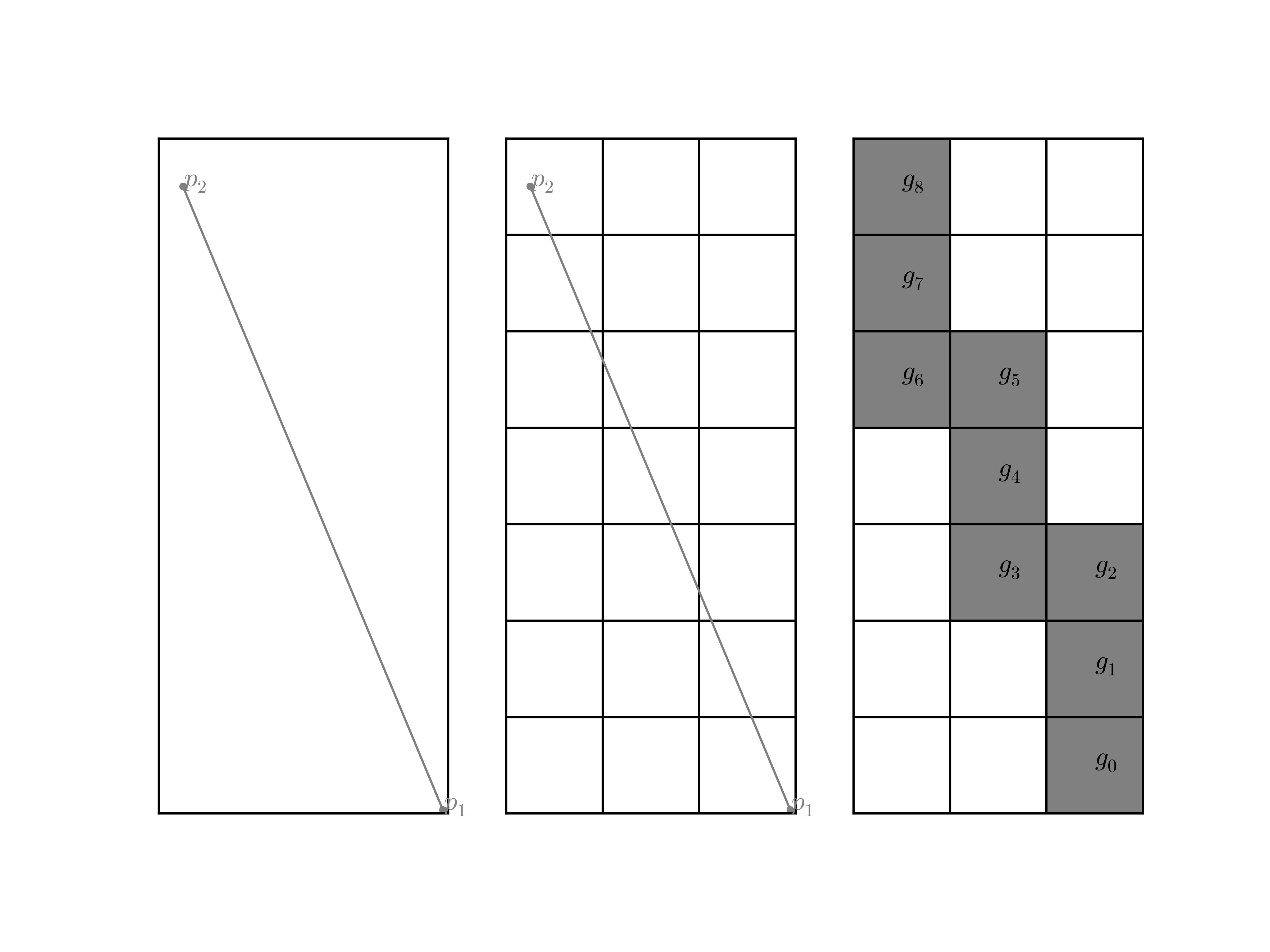}
\caption{Grid representation of a segment}
\label{grid_rep}
\end{figure}

This representation simplifies the computation and reduces the complexity to $O(nm)$ where $m$ is the number of local min points. Local min points of a grid cell $g$ are the grids with distances to $g$ shorter than those of their neighbors' grid cell.
\end{itemize}

Table \ref{shape_based_distance_properties} displays the metric types and the computational cost of these distances. 

 \begin{table}[!t]
\centering
\caption{Shape based distance properties}\label{shape_based_distance_properties}
\begin{tabular}{|c|c|c|} \hline
Name  & Metric Types  & Computation \\
&  & Cost \\ \hline
\textit{Hausdorff}  & \textit{metric} &  $O(n^2)$ \\ \hline
\textit{Frechet}   & \textit{metric} & $O(n^2log(n^2))$ \\ \hline
\textit{discrete Fr\'echet}   & \textit{symmetric} & $O(n^2)$ \\ \hline
\textit{OWD} & \textit{symmetric} & $O(n^2log(n))$ \\ \hline
\textit{$OWD_{grid}$}    & \textit{symmetric} & $O(mn)$ \\ \hline
\end{tabular}
\end{table}

\subsubsection{Pros and Cons}

\begin{itemize}
\item \textit{Frechet} and \textit{Hausdorff} distances are both \textit{metrics}, meaning they satisfy triangular inequality. With clustering algorithms like \textit{dbscan} or \textit{K-medoid} this is a necessary property for the distance used if we want the clustering algorithm to be efficient. They have been widely used in many domains where shape comparison is needed. But they can fail to compare trajectories as a whole. Indeed both \textit{Fr\'echet} and \textit{Hausdorff} distance return a maximum distance between two objects at given points of the two objects. As we can see in Fig. \ref{hausdorff_frechet_inconvenient}, despite the fact that the trajectories $T^1$ and $T^2$ are well separated at the maximum value of x, they are clearly more similar to each other than to $T^3$. But with Hausdorff calculated distance, there are no strong differences between $D_{Haussdorf}(T^1,T^2)$, $D_{Haussdorf}(T^1,T^3)$ and $D_{Haussdorf}(T^2,T^3)$. With Frechet, $D_{Frechet}(T^1,T^2)$ is even bigger than $D_{Frechet}(T^1,T^3)$ and $D_{Frechet}(T^2,T^3)$.

\begin{figure}[!ht]
\centering
\includegraphics[width=\linewidth]{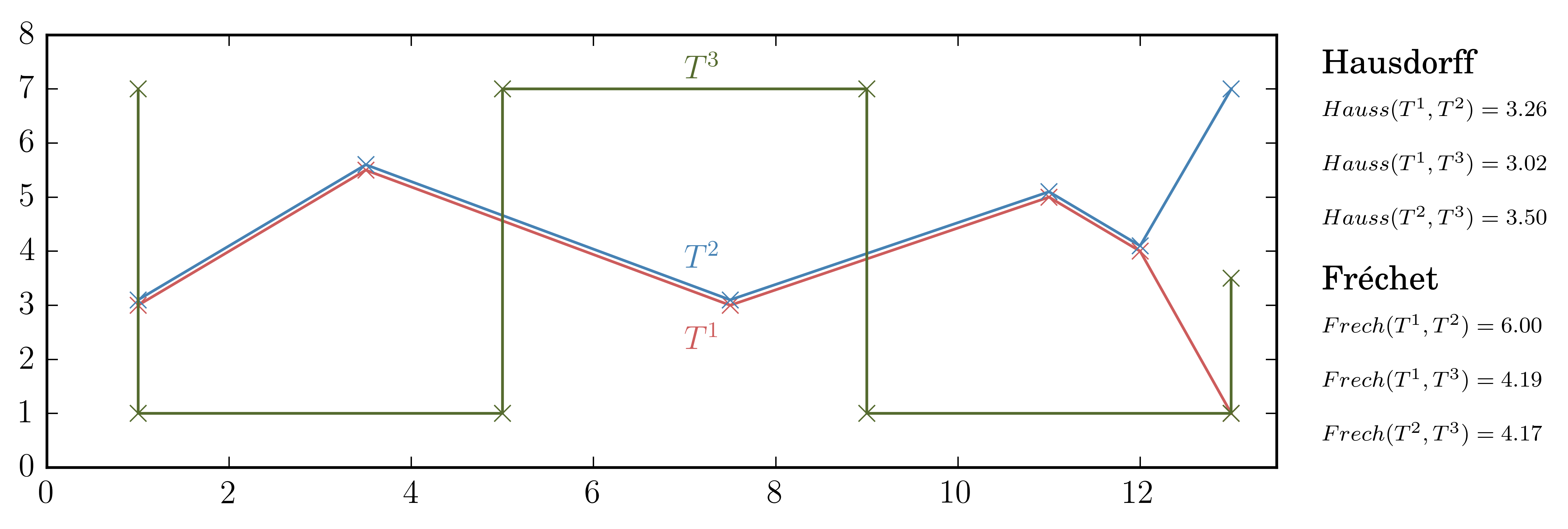}
\caption{Frechet And Hausdorff Computation between three trajectories}
\label{hausdorff_frechet_inconvenient}
\end{figure}

\item The \textit{Discrete Fr\'echet} distance requires considerably less computing time compared to the \textit{Frechet} distance. But \textit{Discrete Frechet} is not a \textit{metric}. Moreover, due to its similarity with the warping distance it inherits the same inconveniences.

\item  The distance present in Lin \textit{et al.} (2005\cite{Lin2005}) is by far the one that best meets our requirements. It compares trajectories as a whole, taking into account their shapes and their physical distances, the required features for our distance. However, its complexity makes it computationally slow. The algorithm for grid representation is faster. Its computational time is $O(mn)$. Yet it does not take into account the computation time required for matching the trajectory to the grid. Moreover, the size of the grid chosen strongly influences the final result and makes it imprecise.
Moreover, the distance gives the same "weight" to all points defining the trajectory: points directly issued from the GPS location, and points which compose the piece wise linear representation. The greater the length of the segment $s$ is, the stronger its influences on the trajectory is. The more separated the endpoints of a segment $s$ is, the less confident the interpolation between them is.  
\end{itemize}

In the following section, a new distance will be established inspired from both the OWD and the Hausdorff distances. 

\section{A new distance : Symmetrized Segment-Path Distance (SSPD)}
\label{section_new_distance}

The shape based distances are by far the distances that best fit the desired properties defined in section \ref{desired_properties}. However none of them matches it perfectly. Hence, a new distance that fits these requirements is provided in this section. The \textit{Symmetrized Segment-Path Distance}. This distance is a shape based distance.It takes into account the whole trajectories, and is less affected by noise.

The distance $D_{pt}$ from a point $p$ to a trajectory $T$ is the minimum of distances between this point and all  segments $s$ that compose $T$. The \textit{Segment-Path distance} from trajectory $T^1$ to trajectory $T^2$ is the mean of all distances from points composing $T^1$ to the trajectory $T^2$

\begin{definition}$SPD$ distance is defined as\label{definition-distance}
$$ 
D_{SPD}(T^1,T^2)=\frac{1}{n_1}\sum_{i_1=1}^{n_1} D_{pt}(p_{i_1}^1,T^2).
$$
where, $D_{pt}(p^1_{i_1},T^2)=\min_{i_2 \in [0,...,n_2-1]}D_{ps}(p^1_{i_1},s^2_{i_2}).$
\end{definition}

\begin{figure}[!ht]
\centering
\includegraphics[width=\linewidth]{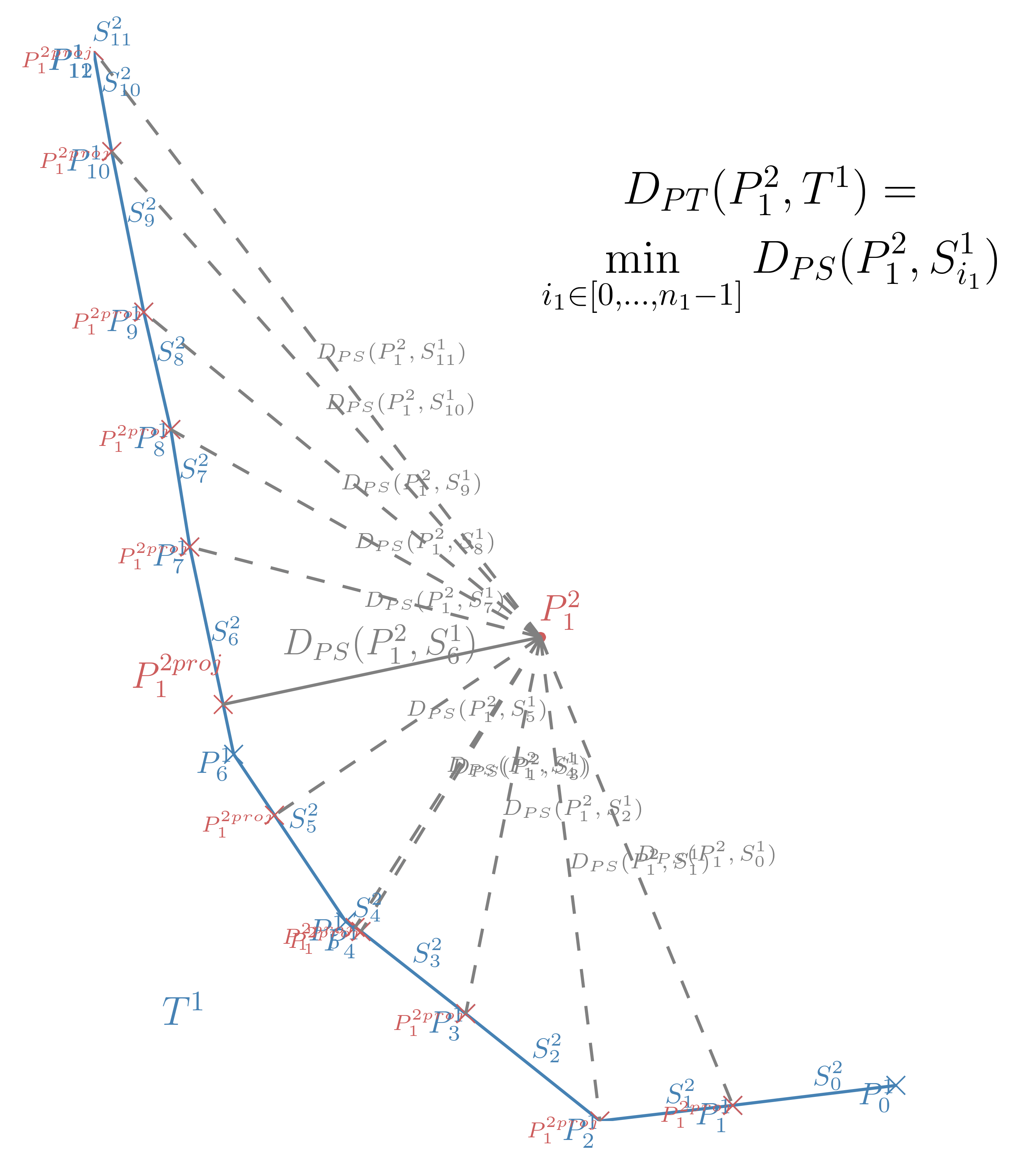}\label{figure_distance_point_trajectoire}
\caption{Distance from point $p^2_1$ to trajectory $T^1$}
\end{figure}

\begin{figure}[!ht]
\centering
\includegraphics[width=\linewidth]{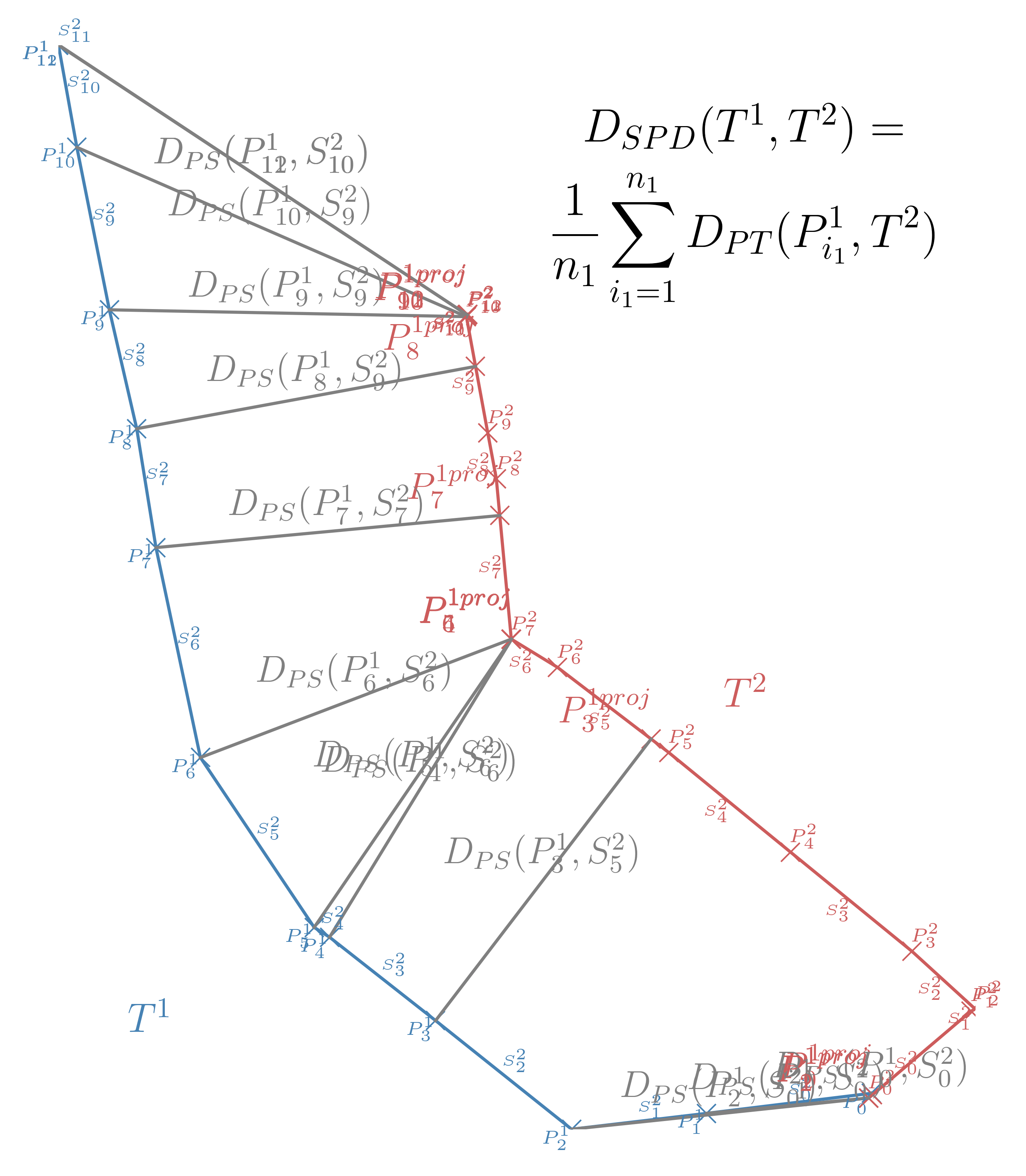}
\caption{$SPD$ Distance from trajectory $T^1$ to trajectory $T^2$ }
\label{figure_distance_trajectoire_trajectoire}
\end{figure}

\begin{proposition}\label{theorem_subtraj} If $T^1$ is a sub trajectory of $T_{pl}^2$,  $D_{SPD}(T^1,T^2)=0$.
\end{proposition}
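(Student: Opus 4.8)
The plan is simply to unwind the definitions and observe that every term in the sum defining $D_{SPD}(T^1,T^2)$ vanishes. First I would make precise what ``$T^1$ is a sub trajectory of $T^2_{pl}$'' means in this context: each location $p^1_{i_1}$ of $T^1$ is a point lying on the polygonal curve $T^2_{pl}=\bigcup_{i_2} s^2_{i_2}$. (The order in which the $p^1_{i_1}$ appear along $T^2_{pl}$ is irrelevant here, since $D_{SPD}$, by Definition~\ref{definition-distance}, depends only on the unordered set of locations of $T^1$ and on $T^2_{pl}$ as a geometric object.) Consequently, for each index $i_1$ there exists a segment index $i_2$ with $p^1_{i_1}\in s^2_{i_2}$.

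Next I would show that $D_{pt}(p^1_{i_1},T^2)=0$ for every $i_1$. Fix $i_1$ and choose $i_2$ with $p^1_{i_1}\in s^2_{i_2}$. Then the orthogonal projection $p^{1proj}_{i_1}$ of $p^1_{i_1}$ onto $s^2_{i_2}$ is $p^1_{i_1}$ itself, which clearly lies on $s^2_{i_2}$, so the first branch of Definition~\ref{definition-distance-segment-trajectory} applies and $D_{ps}(p^1_{i_1},s^2_{i_2})=\|p^1_{i_1}p^{1proj}_{i_1}\|_2=0$. Since $D_{ps}$ is always nonnegative (being either a Euclidean distance or a minimum of two Euclidean distances), the minimum over $i_2$ in the definition of $D_{pt}$ is attained with value $0$, i.e.\ $D_{pt}(p^1_{i_1},T^2)=0$.

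Finally, substituting into Definition~\ref{definition-distance} gives $D_{SPD}(T^1,T^2)=\tfrac{1}{n_1}\sum_{i_1=1}^{n_1}0=0$, which is exactly the claim.

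The only genuine obstacle is conceptual rather than computational: one must be careful to interpret ``sub trajectory of $T^2_{pl}$'' so that each vertex of $T^1$ actually lies on the curve $T^2_{pl}$ (and not merely that $T^1$'s vertices form a subsequence of $T^2$'s GPS vertices, which would be a strictly stronger and less natural reading given that the statement refers to $T^2_{pl}$ rather than $T^2$). Once this is granted, the proof reduces to the point-on-segment case of $D_{ps}$ and is a one-line consequence. It is worth noting in passing that the converse is false and that $D_{SPD}$ is not symmetric in general — indeed $D_{SPD}(T^2,T^1)$ need not be zero in the situation of the proposition — which is precisely the motivation for the symmetrization carried out later; but none of this is required for the present statement.
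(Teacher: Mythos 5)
Your proof is correct and follows essentially the same route as the paper's: each vertex of $T^1$ lies on some segment of $T^2_{pl}$, so $D_{ps}$ to that segment vanishes, hence $D_{pt}=0$ for every vertex and the mean is zero. In fact your version is slightly more careful than the paper's, which asserts $D_{ps}(p^1_{i_1},s^2_{i_2})=0$ for \emph{all} segments $s^2_{i_2}$ (false in general), whereas you correctly only need the minimum over segments to be zero.
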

\begin{proof}
If $T^1$ is a sub trajectory of $T_{pl}^2$, all points of $T^1$ lie within segments $s^2$ that compose $T_{pl}^2$. By definition $D_{ps}(p^1_{i_1},s^2_{i_2})=0$ $\forall p1_{i_1}\in T^1,s^2_{i_1}\in T^2_{pl}$. It follows that $D_{pt}(p^1_{i-1},T^2)=0$ $\forall p1_{i_1}\in T^1$ and finally $D_{SPD}(T^1,T^2)=0$ 
\end{proof}
This distance is not symmetric. If $T^1$ is a very small sub-trajectory of $T^2$, $D_{SPD}(T^1,T^2)=0$, $D_{SPD}(T^2,T^1)$ can be very large. By taking the mean of these distances, the {\bf "Symmetrized Segment-Path Distance"}, SSPD, is defined and is symmetric.

\begin{definition} \textit{Symmetrized Segment-Path Distance} distance \label{definition-symetrized-distance}
$$ \label{equation_distance_modified_OWD}
D_{SSPD}(T^1,T^2)=\frac{D_{SPD}(T^1,T^2)+D_{SPD}(T^2,T^1)}{2}.
$$
\end{definition}

In definitions \ref{definition-distance} and \ref{definition-symetrized-distance}, distances \textit{SPD} and \textit{SSPD} are computed by taking the mean of the \textit{Point-to-Trajectory} distance and the \textit{SPD} distance. If the maximum is used instead of the mean, one recovers the Hausdorff function between two trajectories. Computing only one distance between two locations makes it very sensitive to noise. Yet our method computes the mean of such quantities which makes it less sensitive to this noise. For example, for the trajectories in Fig. \ref{hausdorff_frechet_inconvenient}, the \textit{SSPD} distance between $T^1$ and $T^2$ is lower than the distance between $T^1$ and $T^3$ or $T^2$ and $T^3$ ($D(T^1,T^2) = 0.58,D(T^1,T^3) =  1.5,D(T^2,T^3) =  2.03$).

\begin{proposition} \textit{SSDP} is a \textit{symmetric}.
\end{proposition}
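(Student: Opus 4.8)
The plan is to verify the three conditions that define a \textit{symmetric} here: non-negativity, symmetry, and reflexivity, $D_{SSPD}(T^1,T^1)=0$. Because $D_{SSPD}$ is built from $D_{SPD}$ (Definition \ref{definition-symetrized-distance}), which is itself built from the point-to-trajectory distance $D_{pt}$ (Definition \ref{definition-distance}) and ultimately from the point-to-segment distance $D_{ps}$ (Definition \ref{definition-distance-segment-trajectory}), I would establish each property at the level of $D_{ps}$ and let it propagate upward through the minimum over segments, the averaging over points, and the final symmetrization.

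For non-negativity, observe that in every branch of Definition \ref{definition-distance-segment-trajectory} the value $D_{ps}(p^1_{i_1},s^2_{i_2})$ is a Euclidean norm or a minimum of two Euclidean norms, hence non-negative. The minimum over segments preserves this, so $D_{pt}(p^1_{i_1},T^2)\geq 0$; averaging the $n_1$ non-negative terms with the positive weight $1/n_1$ gives $D_{SPD}(T^1,T^2)\geq 0$, and finally $D_{SSPD}(T^1,T^2)=\frac{1}{2}\big(D_{SPD}(T^1,T^2)+D_{SPD}(T^2,T^1)\big)\geq 0$ as an average of two non-negative quantities. Symmetry is then immediate and is exactly what the symmetrization was designed to produce: exchanging the two arguments in Definition \ref{definition-symetrized-distance} merely reorders the two summands, so $D_{SSPD}(T^2,T^1)=D_{SSPD}(T^1,T^2)$.

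Reflexivity is the only step that calls on earlier machinery. A trajectory is trivially a sub-trajectory of its own piecewise-linear representation, since each vertex $p^1_{i_1}$ lies on an adjacent segment of $T^1_{pl}$ and hence has vanishing point-to-segment distance; Proposition \ref{theorem_subtraj} then gives $D_{SPD}(T^1,T^1)=0$, so that $D_{SSPD}(T^1,T^1)=\frac{1}{2}(0+0)=0$. I do not expect a genuine obstacle in any of the three steps: symmetry holds by construction and non-negativity is a routine cascade, so the only point requiring care is checking that the vertices of $T^1$ really do lie on $T^1_{pl}$ so that Proposition \ref{theorem_subtraj} applies. The triangle inequality is deliberately left untreated, since it is not among the conditions required of a \textit{symmetric}.
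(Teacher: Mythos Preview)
Your proposal misidentifies what must be checked. In this paper's terminology (Definition~3 and Table~\ref{metrics}), a \textit{symmetric} is not merely a dissimilarity: the property that distinguishes \textit{symmetric} from \textit{dissimilarity} is the identity of indiscernibles, $D_{SSPD}(T^1,T^2)=0 \implies T^1=T^2$. Reflexivity is marked with~$\ast$ in Table~\ref{metrics} precisely because it is taken as automatic once the others hold. So the three conditions you list---non-negativity, symmetry, and reflexivity---establish only that $D_{SSPD}$ is a \textit{dissimilarity}; you have left untreated exactly the condition that the proposition is about.

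Your non-negativity and symmetry arguments are fine and match the paper's. But instead of the reflexivity step, the paper argues identity of indiscernibles: since both summands in Definition~\ref{definition-symetrized-distance} are non-negative, $D_{SSPD}(T^1,T^2)=0$ forces $D_{SPD}(T^1,T^2)=0$ and $D_{SPD}(T^2,T^1)=0$; then, invoking (the converse direction of) Proposition~\ref{theorem_subtraj}, each trajectory is a sub-trajectory of the other, whence $T^1=T^2$. Your use of Proposition~\ref{theorem_subtraj} to get $D_{SPD}(T^1,T^1)=0$ is correct but is the wrong direction for what is needed here; the argument has to run from vanishing of $D_{SPD}$ back to a containment of trajectories, not the other way around.
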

\begin{proof}
\textit{SSDP} is a sum of Euclidean distances. By definition \textit{SSDP} is greater or equal to $0$. By definition \ref{definition-symetrized-distance}, SSDP is symmetric. Finally theorem \ref{theorem_subtraj} says that, if $D_{SDP}(T^1,T^2)=0$, $T^1$ is a sub trajectory of $T^2$. Therefore if $D_{SSDP}(T^1,T^2)=0$, both $D_{SDP}(T^1,T^2)=0$ and $D_{SDP}(T^1,T^2)=0$, and $T^1=T^2$. \textit{SSDP} is then a \textit{symmetric}. 
\end{proof}

\textit{SSDP} is quite similar to \textit{OWD} but its definition resolves most of the problems of \textit{OWD} regarding the desired properties defined in \ref{desired_properties}
\begin{itemize}
\item The points coming from the interpolation of two observed locations of a trajectory are less trustworthy that the real observations. Hence, it is natural to strengthen the importance of the observed points.
\item \textit{SSPD} distance does not require any additional parameters such as a threshold or a grid to be computed.
\item Its computation cost is $O(n^2)$. It only depends on the number of locations.
\end{itemize}

\section{Clustering}\label{section_clustering}

To evaluate these different distances, we will study different clustering obtained with the same algorithm but with distances computed using all previous distances. The different selected clustering methods and the quality of cluster criterion are exposed in this section.

\subsection{Methods}
 The choice of the clustering method is restricted by the characteristics of the trajectory object. Indeed, trajectories have different lengths which prevents an easy definition of a mean trajectory object. The \textit{k-means} method cannot be used on our trajectory set, nor \textit{spectral clustering} methods. \textit{k-medoid} can be used but an efficient algorithm, like \textit{partitioning around medoids}, or \textit{dbscan} method, require a valid \textit{metrics}. Indeed, these algorithms are based on nearest neighbor and require the distance used to satisfy the triangular inequality. Most of the studied distances, \textit{SSPD}, \textit{LCSS}, \textit{DTW}, are not metrics. In this way, \textit{dbscan} or \textit{partitioning around medoids} algorithms will not be used. Moreover, \textit{dbscan} depends on two extra parameters that are hard to estimate in this case.

To perform the clustering of the trajectories, we will focus on two methodologies : \textit{hierarchical cluster analysis } (HCA) and \textit{affinity propagation} (AP). As a matter of fact, \textit{HCA} and \textit{AP} can use distance/similarity which does not satisfy the triangle inequality.  We point out that the choice of the clustering method is restricted to the trajectory object we deal with. Actually, trajectories have different lengths. \textit{HCA} and \textit{AP} are both methods which only require the distance/similarity matrix, and thus can cluster objects of different lengths. Both these methods will be used to evaluate our distance.

\subsection{Quality criterion of cluster result}
A clustering algorithm aims at gathering objects into homogeneous groups that are far one from another. Hence, the optimal number of cluster is usually selecting by looking  at the between and within variance of the obtained clusters. In this particular case, they can not be computed here because of the impossibility to compute the mean of the trajectory object. Yet, we approximate this mean by considering an exemplar of a set of a trajectory $\mathcal{T}$ of length $n^{\mathcal{T}}$, defined as 
$T^{ex}_{\mathcal{T}} = \min_{\substack{T^i\\
                i\in[0\hdots n^{\mathcal{T}}]\\
                }} \Big\{ \displaystyle \sum_{\substack{j=1\\
                j\neq i\\
                }}^{n^{\mathcal{T}}} D(T^i,T^j) \Big \}. $

Let $\mathcal{C}_1,\hdots, \mathcal{C}_K$ be a set of clusters of $\mathcal{T}$. Hence, the between and within variance are replace by the \textit{Between-Like} and the \textit{variance-like}.

\begin{definition} \textit{Between-Like} and \textit{Within-Like}
$$BC = \displaystyle \sum_{k=1}^{K} D(T^{ex}_{\mathcal{T}},T^{ex}_{\mathcal{C}_k}), $$ 
$$WC = \displaystyle \sum_{k=1}^{K} \frac{1}{|\mathcal{C}_k|} \sum_{T^i\in{C_k}} D(T^{ex}_{\mathcal{C}_k},T^i).  $$
\end{definition}

The Within-Like criterion shows the spread of elements belonging to the same cluster while the Between-Like criterion shows the spread between clusters.
As for the variance, for a given number of clusters, we want the Within-Like criterion to be as small as possible, and the Between-Like criterion to be as big as possible.

\section{Experimental evaluation}
\label{section_results}

In this section, we evaluate and compare $6$ distances  \textit{LCSS}, \textit{DTW}, \textit{Hausdorff}, \textit{Frechet}, \textit{Frechet Approximation}, and the \textit{SSDP}. All this distance have been implemented in both python and cython and are available in the \textit{traj-dist} package (https://github.com/bguillouet/traj-dist).

We also use python for the implementation of the chosen clustering algorithms, the \textit{sklearn} library for \textit{affinity propagation} and \textit{scipy} library for \textit{hierarchical clustering analysis}. For the latter, \textit{weighted}, \textit{average}, \textit{ward} and \textit{single} linkage criteria have been compared.

\subsection{The Data}

The data we used are GPS data from $536$ San-Francisco taxis over a 24-day period. These data are public and can be found in \cite{cabspotting}. We extracted a subset of this set as shown Fig. \ref{data_caltrain_downtown}.

\begin{figure}[!t]
\centering
\includegraphics[width=\linewidth]{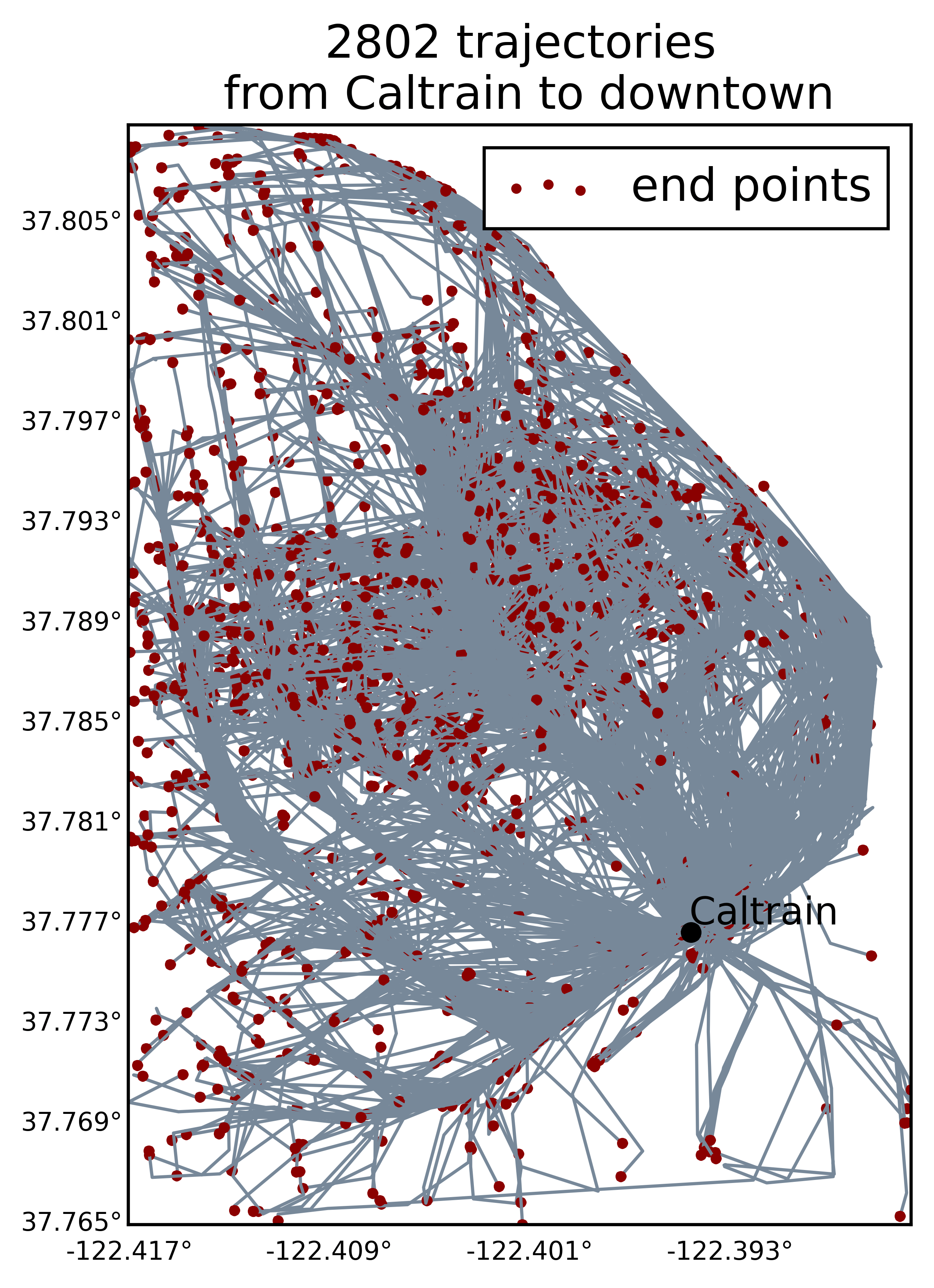}
\caption{Trajectories subset}
\label{data_caltrain_downtown}
\end{figure}

This subset is a blend of 2802 trajectories. They all have the same pickup location, the Caltrain station, and all have their drop-off location in downtown San-Francisco.

\subsection{Computation cost}
In Table \ref{table-computation} we can observe the computation time needed to compute the matrix distance for $100$ trajectories composed of between 3 and 36 locations, most having around 10.

\begin{table}[!t]
\centering
\caption{Computation Time in seconds}\label{table-computation}
\begin{tabular}{|c|c|c|} 
\hline 
Distance & Python & Cython \\\hline
Fr\'echet & 131.76 & 36.32  \\\hline
Discrete Fr\'echet & 3.67  &  2.24 \\\hline
Hausdorff & 13.36 & 0.28 \\\hline
DTW &  3.63 &  0.40 \\ \hline
LCSS &  2.79 & 0.60 \\ \hline
SSPD & 13.20 & 0.32\\ \hline
\end{tabular} 
\end{table}

Fr\'echet distance is the distance that takes most computation time. It is the only method that runs in $O(n^2log(n^2))$. With python, \textit{DTW}, \textit{LCSS} and \textit{Discrete Fr\'echet} distances are the fastest methods, while \textit{Hausdorff} and \textit{SSPD} are the fastest with cython because of its ability to declare static variables and to use the C math library. \textit{DTW}, \textit{LCSS} and \textit{Discrete Fr\'echet} each have a backtracking step which is not improved with the cython implementation. This explains the faster computing time for \textit{Hausdorff} and \textit{SSPD}.

\subsection{Analysis of the number of cluster selection}

In Fig. \ref{smowd_euclidean__medoid_variance_pct} we can observe the evolution of the within- and the between- like criterion described section \ref{section_clustering} for the distance \textit{SSPD} and for the selected methods \textit{AP} and \textit{CAH}. Both the Between-Like and the Within-Like criterion are displayed because the sum of these two criteria is not constant as opposed to the sum of the between and within variance.

\begin{figure}[!ht]
\centering
\includegraphics[width=\linewidth]{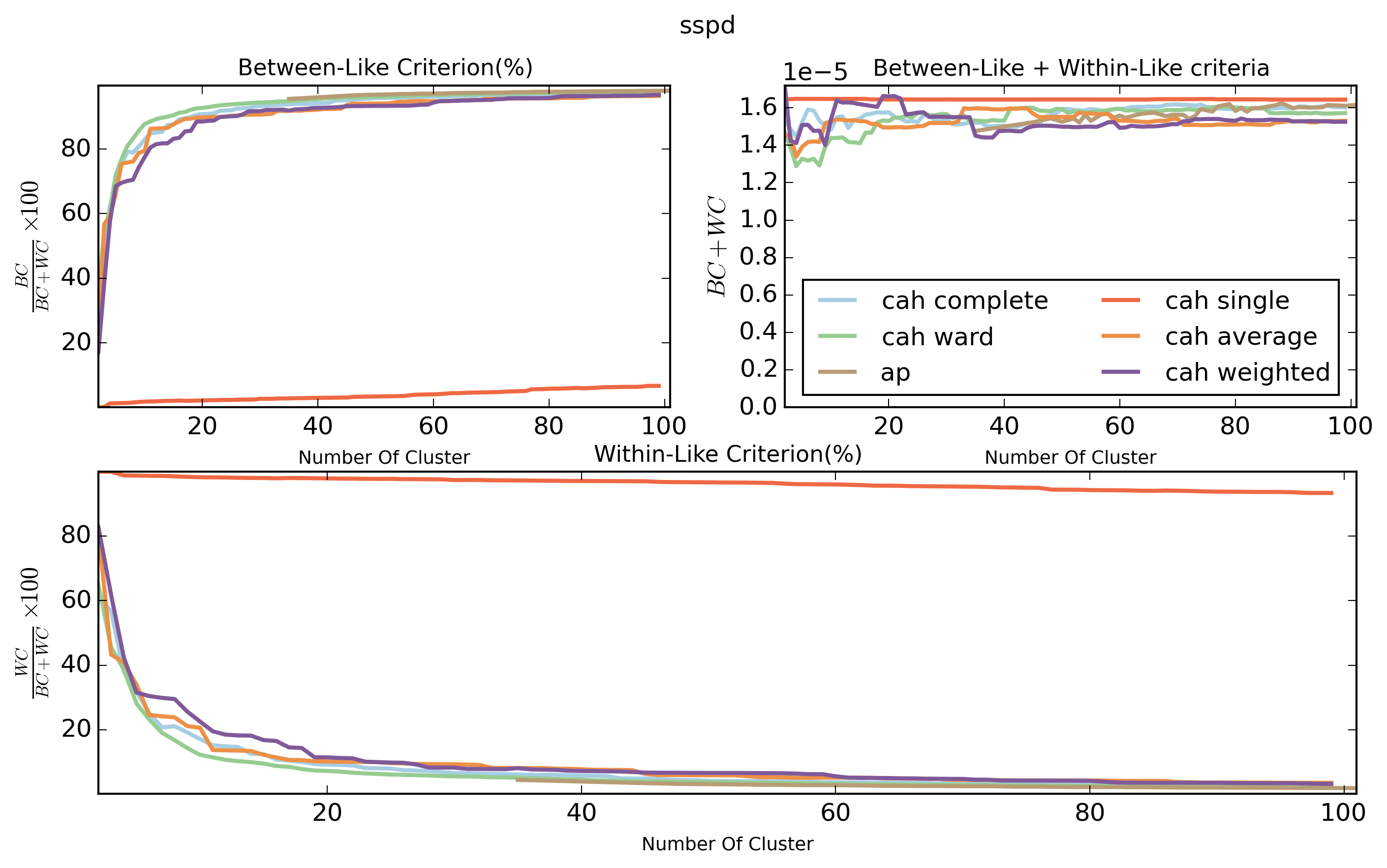}
\caption{Evolution of the Within-Like and Between-Like criteria depending on cluster.}
\label{smowd_euclidean__medoid_variance_pct}
\end{figure}

The \textit{CAH single} method gives poor results. All other methods have the same evolution of the studied criterion depending on the cluster size. A plateau can be observed starting from a clusters size between $15$ and $20$. Adding more cluster does not decrease significantly the Within-Like Criterion. Twenty is a good cluster size for the \textit{CAH method}.

\textit{CAH Ward} and \textit{AP} give the best results. But the latter does not find any clustering with less than $38$ clusters which is a too large cluster size.

The same conclusions can be made with the six studied distances.

The \textit{CAH Ward} method with cluster size of $20$ and the \textit{AP} method with the \textit{preference} parameter fixed to the minimum of the computed matrix distance will be used to compare the studied distances in more details.

\subsection{Analysis of the distances}

We can observe the evolution of the Within-Like and the Between-Like criteria for the two selected clustering methods as well as for all studied distances. The \textit{CAH WARD} results are display in Fig. \ref{compare_ward_variance}, and the \textit{AP} results in Fig. \ref{compare_ap_variance}.

\begin{figure}[!ht]
\centering
\includegraphics[width=\linewidth]{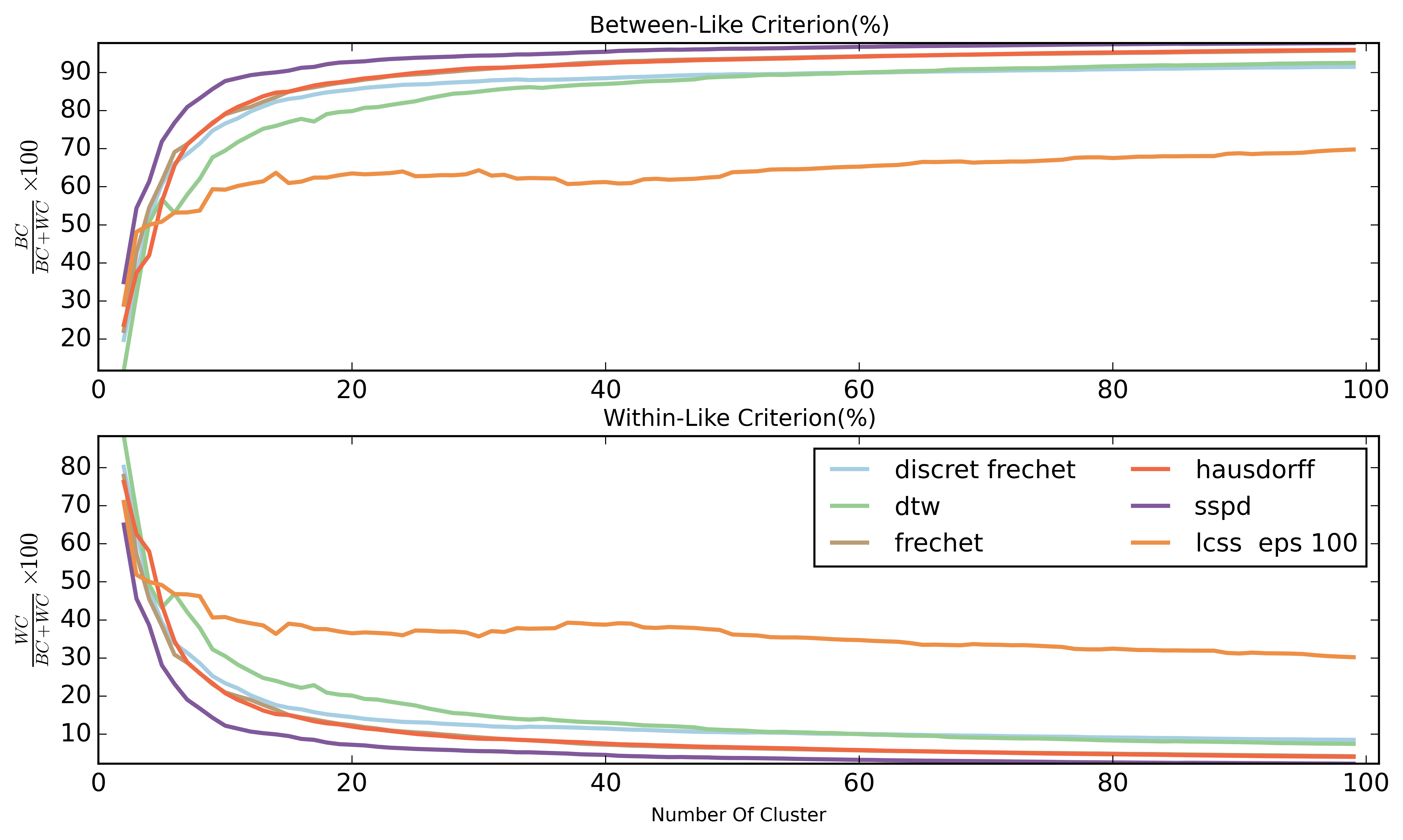}
\caption{Evolution of the Within-Like and Between-Like criteria depending on cluster size for all distances using the CAH-WARD method}
\label{compare_ward_variance}
\end{figure}

\begin{figure}[!ht]
\centering
\includegraphics[width=\linewidth]{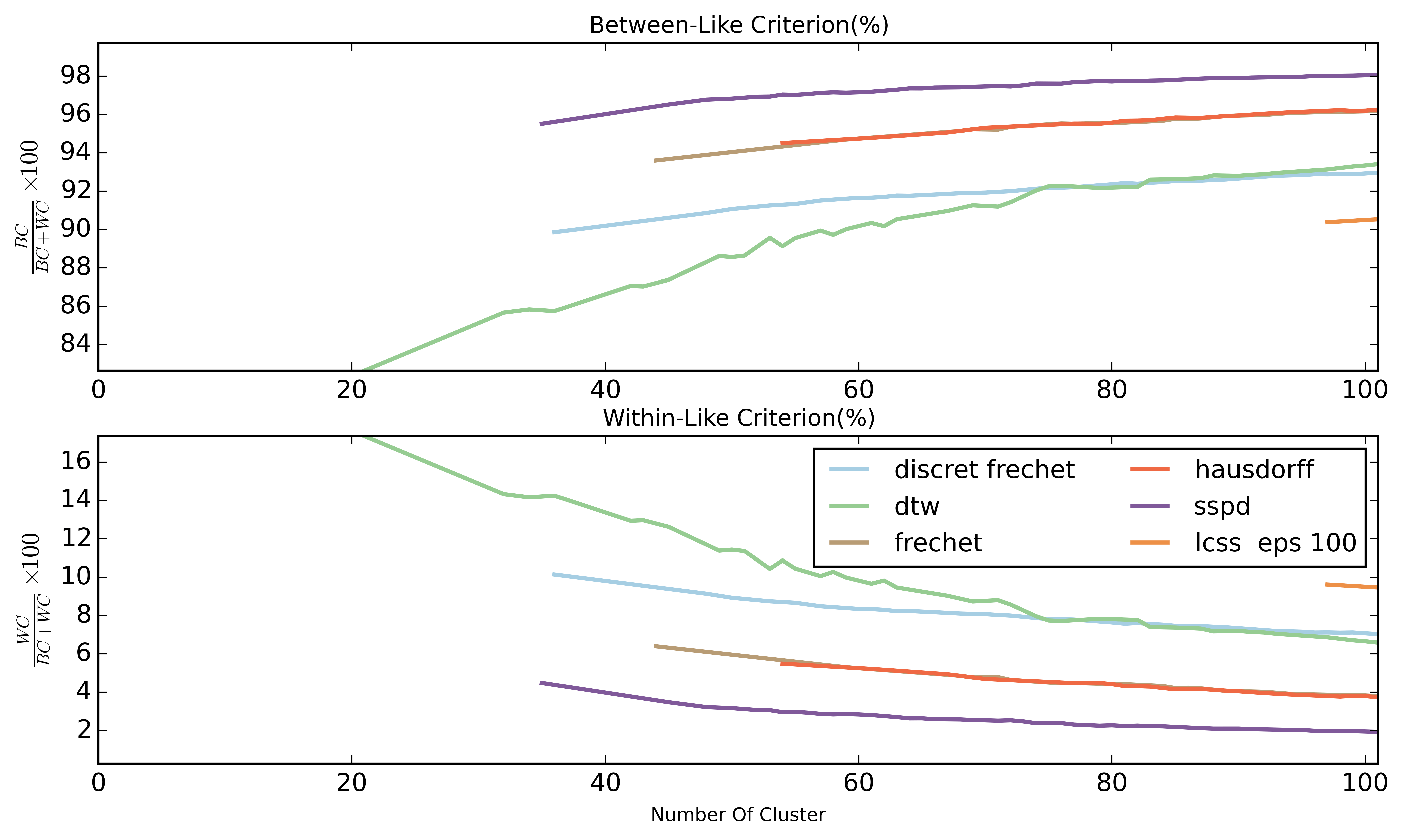}
\caption{Evolution of the Within-Like and Between-Like criteria depending on the cluster size for all distances using the AP method}
\label{compare_ap_variance}
\end{figure}

The minimum of cluster size found by the \textit{AP} method differs significantly according to the used distance. No more thant $21$ clusters are found with the \textit{DTW} distance, and $35$ with \textit{SSPD} or $54$ with \textit{Hausdorff}. 

The Warping-based distances, \textit{LCSS} and \textit{DTW}, give the poorest results with \textit{LCSS} being significantly worse than \textit{DTW}. The two shape-based distances \textit{Frechet} and \textit{Hausdorff} give better results. The evolution of their criteria is very similar to each other. The \textit{Discrete Fr\'echet} distance is between these two types of distances. These results confirm that shape-based distances are better adapted than warping-based distances for our objectives. 

Finally, the new distance \textit{SSDP} gives the best results. It has the lowest value of Within-Like Criterion for all cluster sizes and with both \textit{CAH WARD} and \textit{AP} clustering methods.

We can observe the visual results for this distance and both clustering methods, in Fig. \ref{fig:1}, and the isolated clusters, in Fig. \ref{fig2:1},

\begin{figure}[!ht]
\begin{minipage}{0.49\linewidth}
\includegraphics[width=\textwidth]{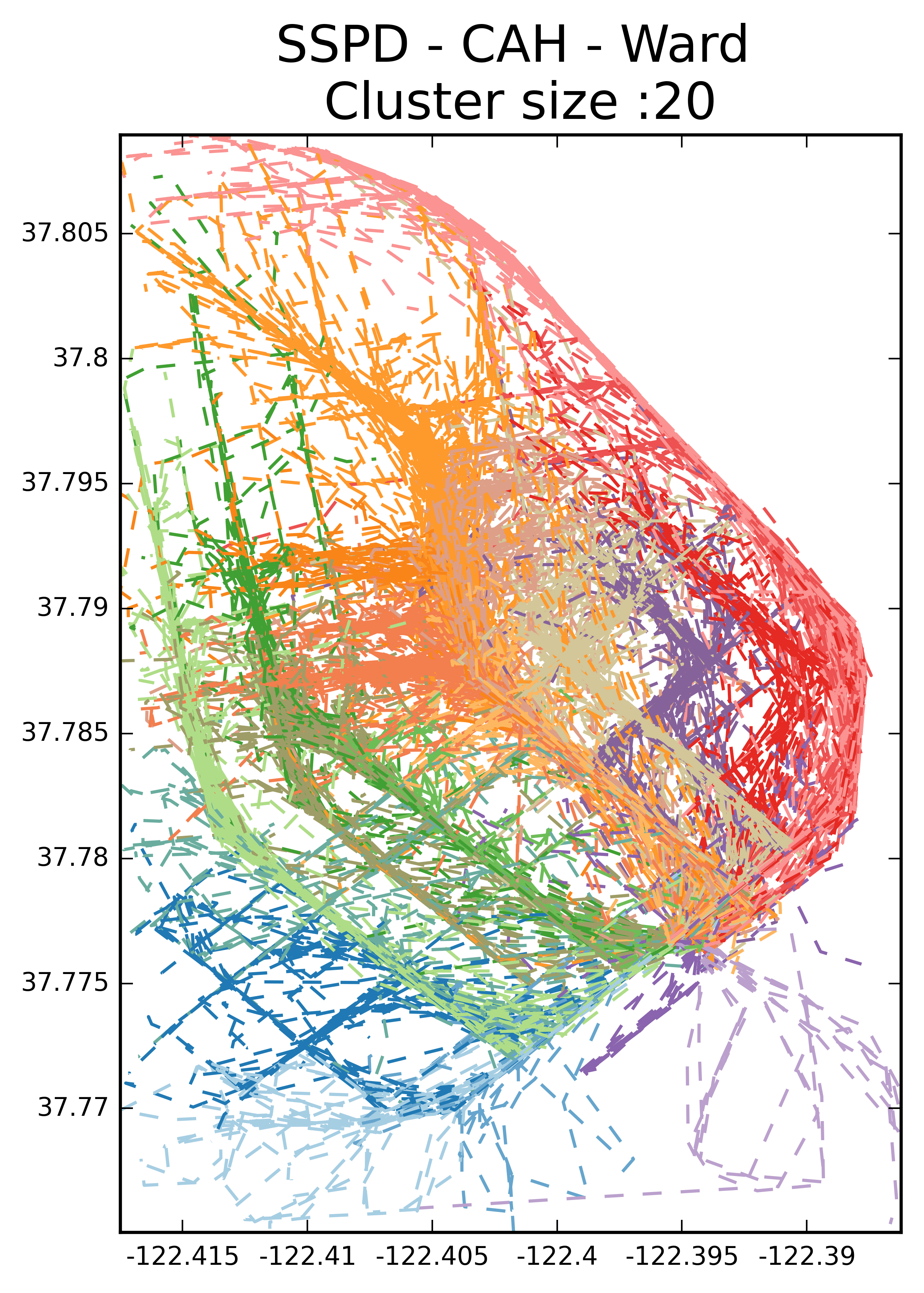}
\end{minipage}
\begin{minipage}{0.49\linewidth}
\includegraphics[width=\textwidth]{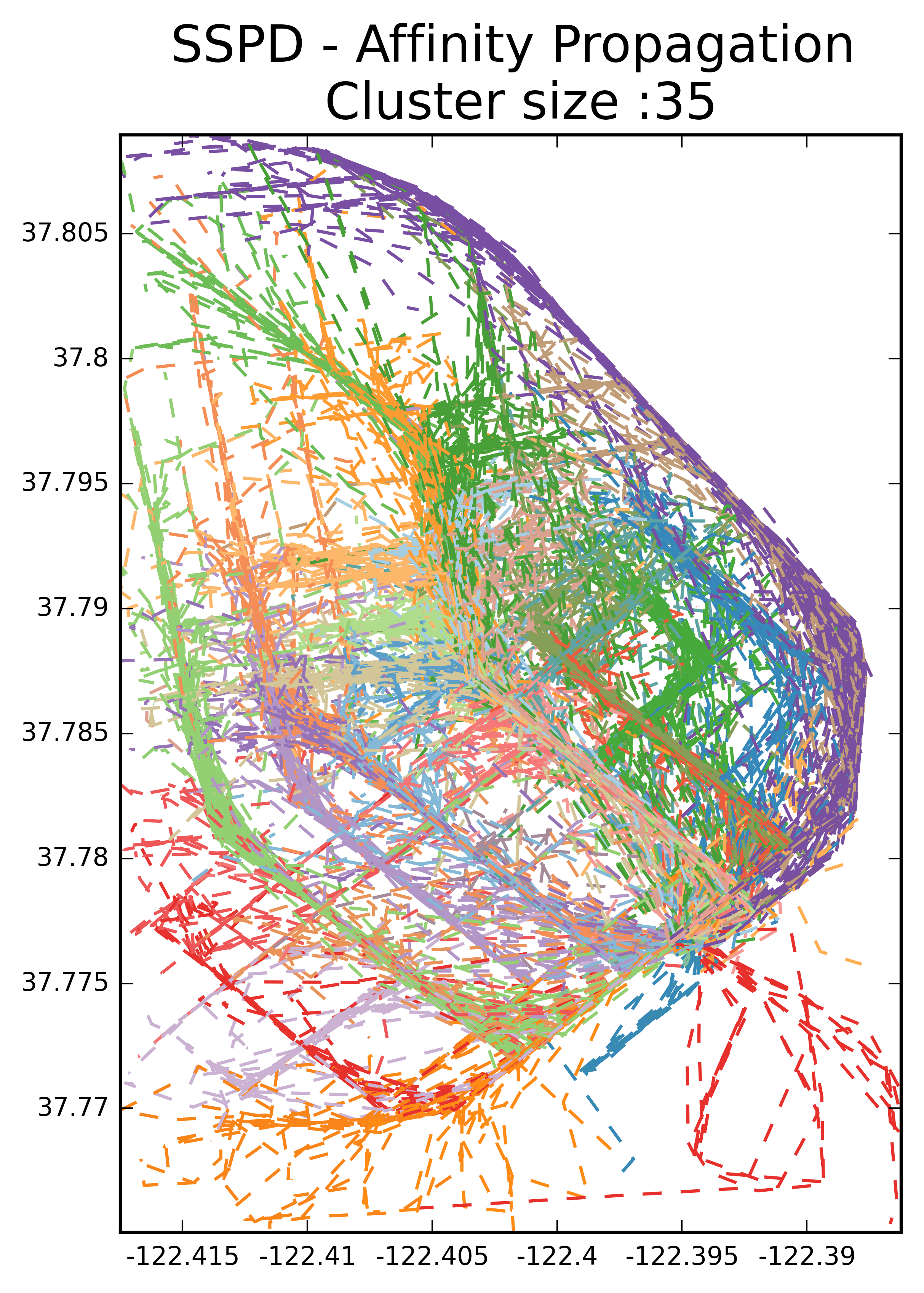}
\end{minipage}
\caption{Clustering results with \textit{SSPD} distance}
\label{fig:1}
\end{figure}

We can observe that trajectories are well classified according to their path. In Fig. \ref{fig2:1}, clusters found with \textit{CAH WARD} seems to be consistent. The cluster size with \textit{AP} method is $38$. This is a large number according to the whithin-Like criterion computed with \textit{CAH}. In fact, the Within-Like criterion does not decrease much between $20$ and $35$. However, we can see that clusters found with \textit{AP} are still consistent.  

\begin{figure}[!ht]
\begin{minipage}{0.45\linewidth}
\includegraphics[width=\textwidth]{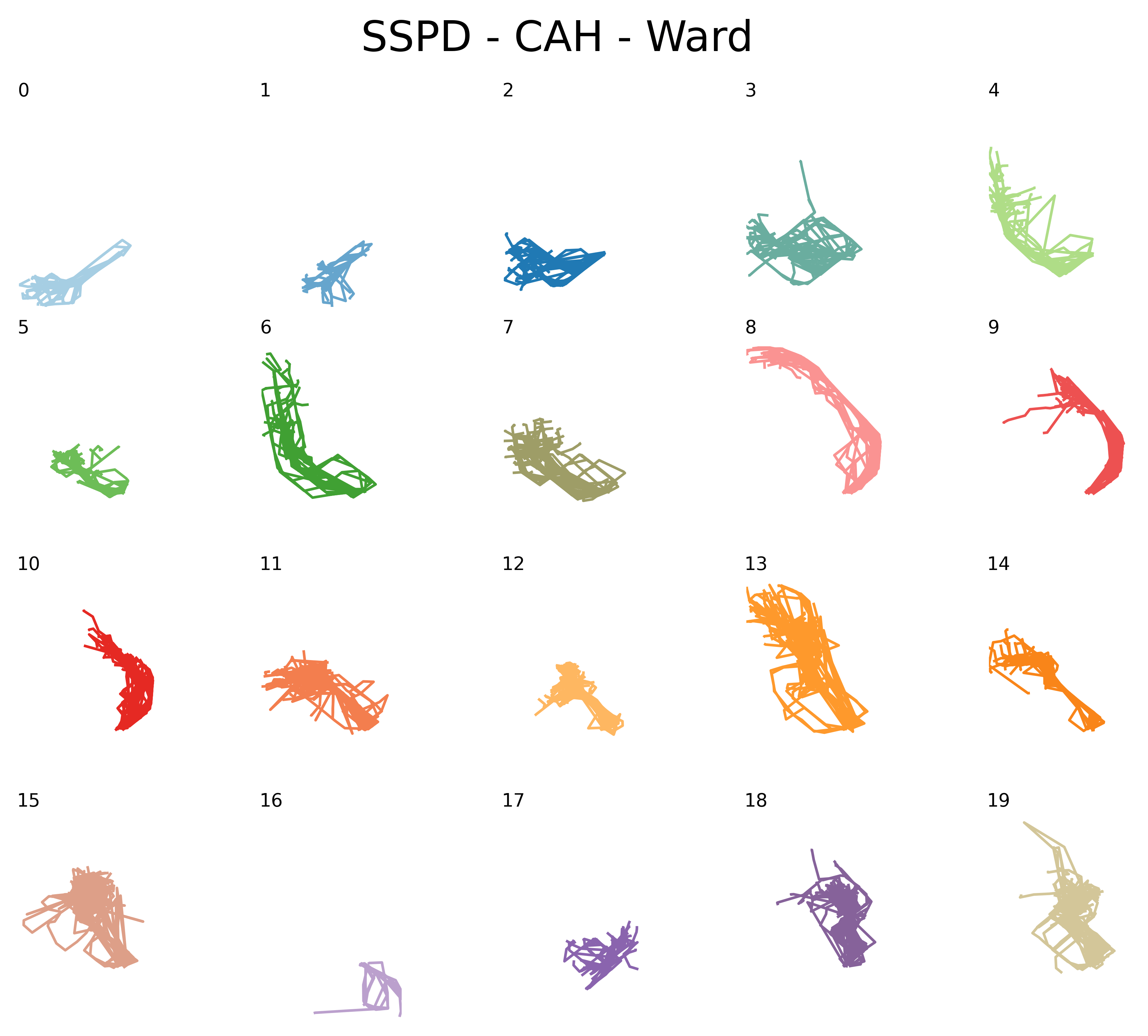}
\end{minipage}
\hspace{1cm}
\begin{minipage}{0.45\linewidth}
\includegraphics[width=\textwidth]{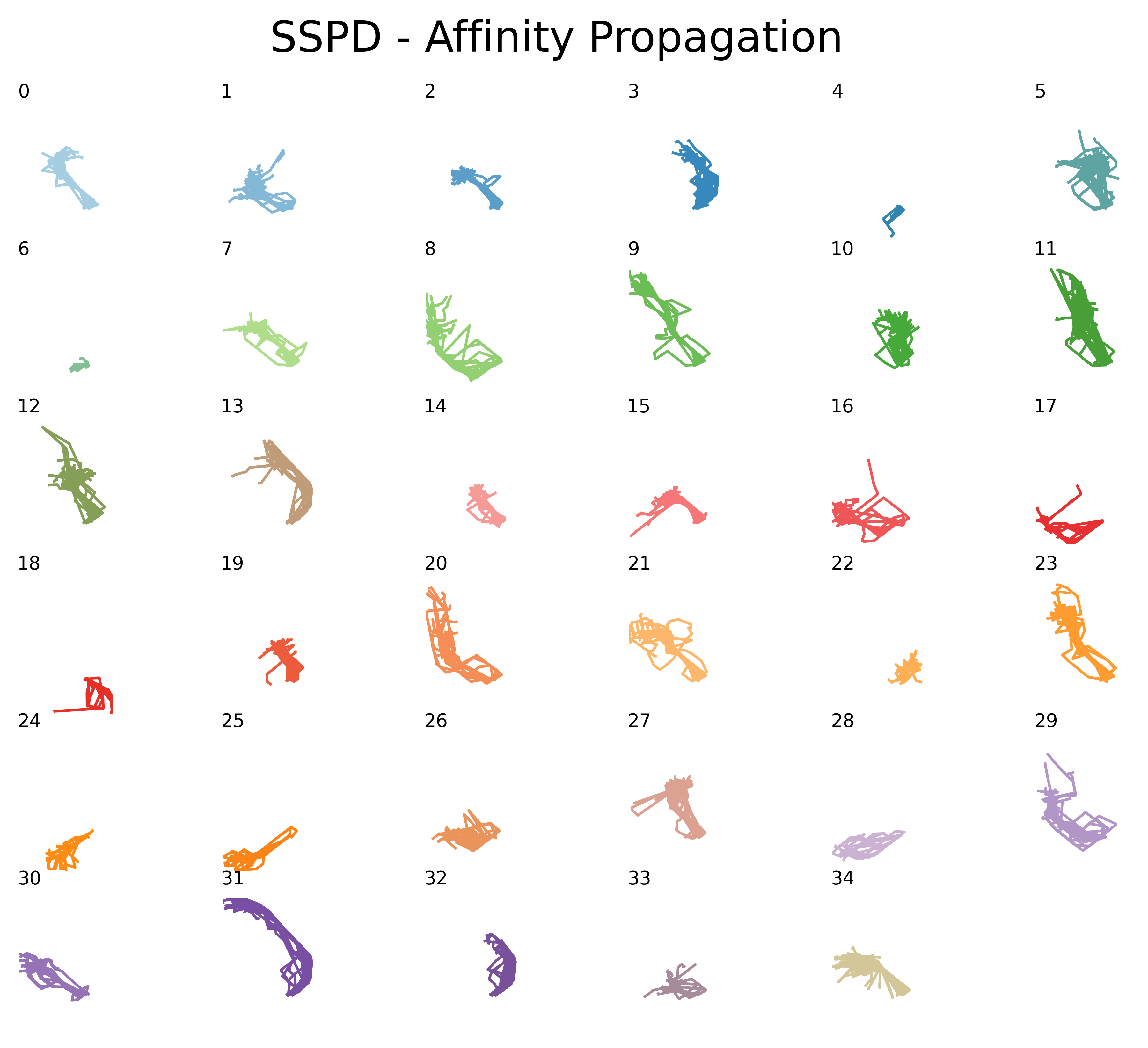}
\end{minipage}
\caption{The isolated clusters}
\label{fig2:1}
\end{figure}

A cluster computed with the \textit{CAH WARD} method based on a matrix distance computed with \textit{SSPD} gives best result. The Between-Like and Within-Like criteria show that this method is good to regroup cluster around exemplar.

\section*{Conclusion}
Clustering of non Euclidean objects deeply relies of the choice of a proper distance. For trajectories analysis, we presented different distances focusing on different features of such objects. To cope with their different weakness we propose a new distance, the \textit{Symmetrized Segment-Path Distance}. This distance is time insensitive, and compares the shape and the physical distance between two trajectory objects. It enables to obtain a good clustering using either \textit{hierarchical clustering} and \textit{affinity propagation} methods. Hence the clusters obtained are homogeneous with regard to shape and seem to properly capture the behaviours of the drivers. We have thus obtained a partition of the network based on the uses of the drivers that can still be interpreted as vehicles trajectories. Using such  features to forecast the final destination of the drivers will be tackled in a following work.


%
\nocite{*}
\bibliographystyle{IEEEtran}
\bibliography{bibl}

\begin{thebibliography}{10}
\providecommand{\url}[1]{#1}
\csname url@samestyle\endcsname
\providecommand{\newblock}{\relax}
\providecommand{\bibinfo}[2]{#2}
\providecommand{\BIBentrySTDinterwordspacing}{\spaceskip=0pt\relax}
\providecommand{\BIBentryALTinterwordstretchfactor}{4}
\providecommand{\BIBentryALTinterwordspacing}{\spaceskip=\fontdimen2\font plus
\BIBentryALTinterwordstretchfactor\fontdimen3\font minus
  \fontdimen4\font\relax}
\providecommand{\BIBforeignlanguage}[2]{{%
\expandafter\ifx\csname l@#1\endcsname\relax
\typeout{** WARNING: IEEEtran.bst: No hyphenation pattern has been}%
\typeout{** loaded for the language `#1'. Using the pattern for}%
\typeout{** the default language instead.}%
\else
\language=\csname l@#1\endcsname
\fi
#2}}
\providecommand{\BIBdecl}{\relax}
\BIBdecl

\bibitem{Tiakas2009}
E.~Tiakas, A.~Papadopoulos, A.~Nanopoulos, Y.~Manolopoulos, D.~Stojanovic, and
  S.~Djordjevic-Kajan, ``Searching for similar trajectories in spatial
  networks,'' \emph{Journal of Systems and Software}, vol.~82, no.~5, pp.
  772--788, 2009.

\bibitem{Rossi2012}
M.~K. El~Mahrsi and F.~Rossi, ``Graph-based approaches to clustering
  network-constrained trajectory data.'' in \emph{NFMCP}.\hskip 1em plus 0.5em
  minus 0.4em\relax Springer, 2012, pp. 124--137.

\bibitem{Han2012}
B.~Han, L.~Liu, and E.~Omiecinski, ``Neat: Road network aware trajectory
  clustering,'' in \emph{Distributed Computing Systems (ICDCS), 2012 IEEE 32nd
  International Conference on}.\hskip 1em plus 0.5em minus 0.4em\relax IEEE,
  2012, pp. 142--151.

\bibitem{Hwang2005}
J.-R. Hwang, H.-Y. Kang, and K.-J. Li, \emph{Spatio-temporal similarity
  analysis between trajectories on road networks}.\hskip 1em plus 0.5em minus
  0.4em\relax Springer, 2005.

\bibitem{Berndt1994}
D.~J. Berndt and J.~Clifford, ``Using dynamic time warping to find patterns in
  time series.'' in \emph{KDD workshop}, vol.~10, no.~16.\hskip 1em plus 0.5em
  minus 0.4em\relax Seattle, WA, 1994, pp. 359--370.

\bibitem{Vlachos2002}
M.~Vlachos, G.~Kollios, and D.~Gunopulos, ``Discovering similar
  multidimensional trajectories,'' in \emph{Data Engineering, 2002.
  Proceedings. 18th International Conference on}.\hskip 1em plus 0.5em minus
  0.4em\relax IEEE, 2002, pp. 673--684.

\bibitem{Chen2004}
L.~Chen and R.~Ng, ``On the marriage of lp-norms and edit distance,'' in
  \emph{Proceedings of the Thirtieth international conference on Very large
  data bases-Volume 30}.\hskip 1em plus 0.5em minus 0.4em\relax VLDB Endowment,
  2004, pp. 792--803.

\bibitem{Chen2005}
L.~Chen, M.~T. {\"O}zsu, and V.~Oria, ``Robust and fast similarity search for
  moving object trajectories,'' in \emph{Proceedings of the 2005 ACM SIGMOD
  international conference on Management of data}.\hskip 1em plus 0.5em minus
  0.4em\relax ACM, 2005, pp. 491--502.

\bibitem{Lin2005}
B.~Lin and J.~Su, ``Shapes based trajectory queries for moving objects,'' in
  \emph{Proceedings of the 13th annual ACM international workshop on Geographic
  information systems}.\hskip 1em plus 0.5em minus 0.4em\relax ACM, 2005, pp.
  21--30.

\bibitem{Deza2009}
M.~M. Deza and E.~Deza, \emph{Encyclopedia of distances}.\hskip 1em plus 0.5em
  minus 0.4em\relax Springer, 2009.

\bibitem{Hausdorff1914}
F.~Hausdorff, ``Grundz uge der mengenlehre,'' 1914.

\bibitem{Frechet1906}
M.~M. Fr{\'e}chet, ``Sur quelques points du calcul fonctionnel,''
  \emph{Rendiconti del Circolo Matematico di Palermo (1884-1940)}, vol.~22,
  no.~1, pp. 1--72, 1906.

\bibitem{Alt1995}
H.~Alt and M.~Godau, ``Computing the fr{\'e}chet distance between two polygonal
  curves,'' \emph{International Journal of Computational Geometry \&
  Applications}, vol.~5, no. 01n02, pp. 75--91, 1995.

\bibitem{Eiter1994}
T.~Eiter and H.~Mannila, ``Computing discrete fr{\'e}chet distance,'' Citeseer,
  Tech. Rep., 1994.

\bibitem{cabspotting}
M.~Piorkowski, N.~Sarafijanovic-Djukic, and M.~Grossglauser, ``{CRAWDAD} data
  set epfl/mobility (v. 2009-02-24),'' Downloaded from
  http://crawdad.org/epfl/mobility/, Feb. 2009.

\bibitem{Lee2000}
S.-L. Lee, S.-J. Chun, D.-H. Kim, J.-H. Lee, and C.-W. Chung, ``Similarity
  search for multidimensional data sequences,'' in \emph{Data Engineering,
  2000. Proceedings. 16th International Conference on}.\hskip 1em plus 0.5em
  minus 0.4em\relax IEEE, 2000, pp. 599--608.

\bibitem{Lee2007}
J.-G. Lee, J.~Han, and K.-Y. Whang, ``Trajectory clustering: a
  partition-and-group framework,'' in \emph{Proceedings of the 2007 ACM SIGMOD
  international conference on Management of data}.\hskip 1em plus 0.5em minus
  0.4em\relax ACM, 2007, pp. 593--604.

\bibitem{Shim2003}
C.-B. Shim and J.-W. Chang, ``Similar sub-trajectory retrieval for moving
  objects in spatio-temporal databases,'' in \emph{Advances in Databases and
  Information Systems}.\hskip 1em plus 0.5em minus 0.4em\relax Springer, 2003,
  pp. 308--322.

\bibitem{Srivastava2011}
A.~Srivastava, E.~Klassen, S.~H. Joshi, and I.~H. Jermyn, ``Shape analysis of
  elastic curves in euclidean spaces,'' \emph{Pattern Analysis and Machine
  Intelligence, IEEE Transactions on}, vol.~33, no.~7, pp. 1415--1428, 2011.

\end{thebibliography}

\end{document}